\documentclass{article}
\usepackage{iclr2026_conference,times}

\usepackage{amsmath}
\usepackage{amsthm}
\usepackage{bm}
\usepackage[utf8]{inputenc} 
\usepackage[T1]{fontenc}   
\usepackage{url}          
\usepackage{booktabs}     
\usepackage{amsfonts}      
\usepackage{nicefrac}      
\usepackage{microtype}    
\usepackage{xcolor}       
\usepackage{graphicx}
\usepackage{makecell}
\usepackage{algorithm}
\usepackage{algorithmic}
\usepackage{adjustbox}
\usepackage{amsmath,amsfonts,bm}

% theorem
\theoremstyle{plain}
\newtheorem{theorem}{Theorem}[section]

\newtheorem{lemma}[theorem]{Lemma}

\theoremstyle{definition}

\theoremstyle{remark}

% Figure reference, lower-case.

% Figure reference, capital. For start of sentence
\def\Figref#1{Figure~\ref{#1}}
\def\Appref#1{Appendix~\ref{#1}}
\def\Tabref#1{Table~\ref{#1}}

% Section reference, lower-case.

% Section reference, capital.
\def\Secref#1{Section~\ref{#1}}
% Reference to two sections.

% Reference to three sections.

% Reference to an equation, lower-case.
\def\eqref#1{equation~\ref{#1}}
% Reference to an equation, upper case
\def\Eqref#1{Equation~\ref{#1}}
% A raw reference to an equation---avoid using if possible

% Reference to a chapter, lower-case.

% Reference to an equation, upper case.

% Reference to a range of chapters

% Reference to an algorithm, lower-case.

% Reference to an algorithm, upper case.

% Reference to a part, lower case

% Reference to a part, upper case

\def\1{\bm{1}}

% Random variables

% rm is already a command, just don't name any random variables m

% Random vectors

% Elements of random vectors

% Random matrices

% Elements of random matrices

% Vectors

\def\vh{{\bm{h}}}

\def\vp{{\bm{p}}}

\def\vt{{\bm{t}}}

\def\vx{{\bm{x}}}

% Elements of vectors

% Matrix

% Tensor
\DeclareMathAlphabet{\mathsfit}{\encodingdefault}{\sfdefault}{m}{sl}
\SetMathAlphabet{\mathsfit}{bold}{\encodingdefault}{\sfdefault}{bx}{n}

% Graph

% Sets

% Don't use a set called E, because this would be the same as our symbol
% for expectation.

% Entries of a matrix

% entries of a tensor
% Same font as tensor, without \bm wrapper

% The true underlying data generating distribution

% The empirical distribution defined by the training set

% The model distribution

% Stochastic autoencoder distributions

 % Laplace distribution

\newcommand{\R}{\mathbb{R}}

% Wolfram Mathworld says $L^2$ is for function spaces and $\ell^2$ is for vectors
% But then they seem to use $L^2$ for vectors throughout the site, and so does
% wikipedia.

 % See usage in notation.tex. Chosen to match Daphne's book.

\DeclareMathOperator*{\argmin}{arg\,min}

\definecolor{darkblue}{RGB}{0,0,139}
\definecolor{darkred}{RGB}{139,0,0}

\usepackage[
    colorlinks=true,
    citecolor=darkblue,     % Dark blue for citations
    linkcolor=darkred,      % Dark red for internal links/references
    urlcolor=darkblue,      % Dark blue for URLs
    filecolor=darkred,      % Dark red for file links
]{hyperref}

\title{Speculative Sampling for Parametric\\Temporal Point Processes}

\author{Marin Bilo\v{s}, Anderson Schneider \& Yuriy Nevmyvaka  \\
Machine Learning Research\\
Morgan Stanley \\
\texttt{\{firstname.lastname\}@morganstanley.com} 
}

% Look for "Published as a conference paper at ICLR 2026" in .sty to set proper camera ready
\iclrfinalcopy 
\begin{document}

\maketitle

\begin{abstract}
Temporal point processes are powerful generative models for event sequences that capture complex dependencies in time-series data. They are commonly specified using autoregressive models that learn the distribution of the next event from the previous events. This makes sampling inherently sequential, limiting efficiency. 
In this paper, we propose a novel algorithm based on rejection sampling that enables exact sampling of multiple future values from existing TPP models, in parallel, and without requiring any architectural changes or retraining.
Besides theoretical guarantees, our method demonstrates empirical speedups on real-world datasets, bridging the gap between expressive modeling and efficient parallel generation for large-scale TPP applications.
\end{abstract}

\section{Introduction}

Event data is prevalent in social networks, natural phenomena, and financial transactions. Events occur irregularly which poses unique challenges, particularly as the timing of events is often influenced by the history. For example, aftershocks follow earthquakes and replies follow messages.
Event data often has high sampling frequency with thousands of events per second. Since every millisecond counts, it is crucial to have a scalable sampling method while capturing the true process.

Temporal point processes (TPPs) are the canonical framework for modeling events, they generate sequences consisting of event types (marks) and arrival times on some time interval. The most common implementation is an autoregressive model, where the history of events informs the prediction of the next event. Consequently, the sampling process is sequential which can be inefficient, especially in high-frequency data \citep{ait2014high}, leading to bottlenecks in real-time applications.

Neural TPPs are primarily designed as autoregressive models \citep{shchur2021neural}, similar to their counterparts in time series and language modeling \citep{salinas2020deepar,radford2019language}. Some recent works propose alternatives that learn to predict multiple future steps instead of only one \citep{gloeckle2024better,zeng2023transformers,ludke2023add}. Our approach takes a different route, we introduce a new sampling procedure that does not require altering or retraining the underlying model. This allows us to improve the efficiency of existing parametric autoregressive models.

\begin{figure}[t]
    \centering
    \includegraphics[width=\linewidth]{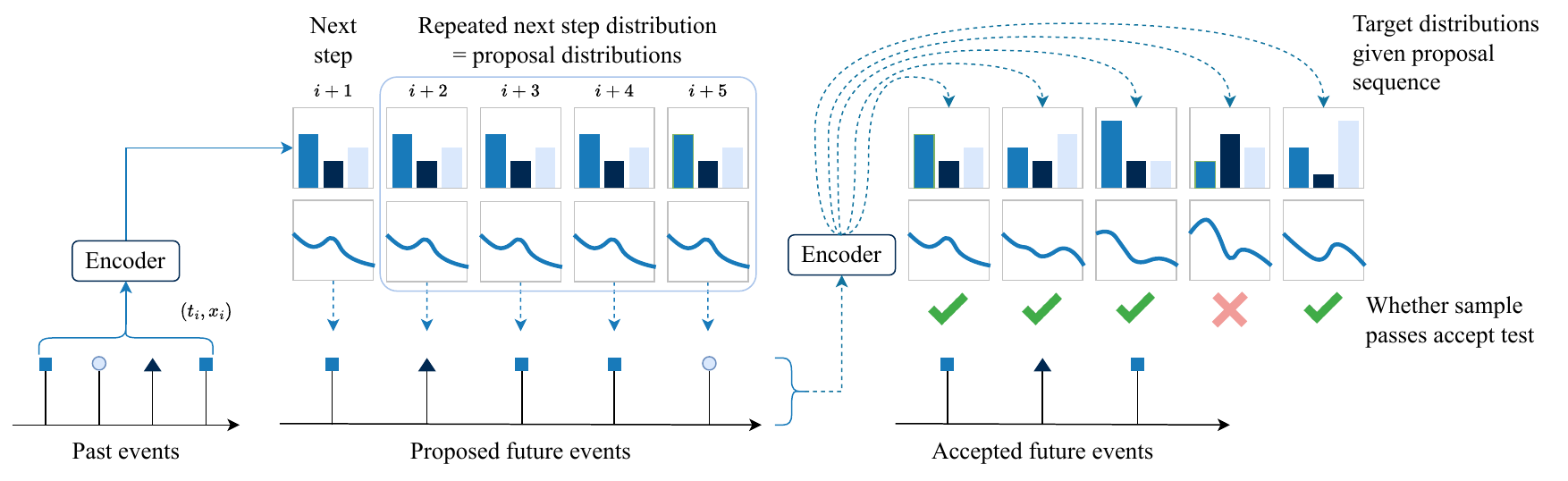}
    \caption{Illustration of our speculative sampling method for temporal point processes.}
    \label{fig:fig1}
\end{figure}

\Figref{fig:fig1} illustrates our approach. A pretrained encoder predicts the distribution of the next event based on historical data. We reuse this prediction as a proposal distribution to generate multiple future events simultaneously. The same encoder inputs all proposed samples, in parallel, producing the target distributions. We accept proposed samples until we encounter the first event where the proposal and target distributions diverge. In \Figref{fig:fig1}, this occurs at the fourth step, accepting the first three proposed events. In the paper we  rigorously show this is an exact sampling procedure.

Our main contributions are: \textbf{(1)} a novel method for sampling multiple future events that can be seamlessly applied to many TPP models. Our method addresses practical problems in real-world domains like finance, making it relevant to practitioners. \textbf{(2)} We propose a way to compute the rejection sampling constant for most common distribution choices and provide a theoretical foundation for our approach. \textbf{(3)} We show improvements in sampling efficiency by conducting experiments on widely used benchmark datasets. Additionally, the results provide new insights into these datasets. \textbf{(4)} We highlight a particularly suitable application of our technique in the financial domain.
\section{Background}

\subsection{Temporal point processes}

Temporal point processes (TPPs) \citep{daley2006introduction} are stochastic processes whose realizations are event sequences $\vx = (x_1, \dots, x_n)$, $x_i \in \{1, \dots, D\}$ observed at strictly increasing arrival times $\vt = (t_1, \dots, t_n)$, $0 < t_1 < \dots < t_n \leq T$. That is, each event is a random point in time $t_i$ with an assigned event type $x_i$, called a mark. The future events often depend on the past, so we denote the history of the \textit{i}th event as $\mathcal{H}_{i} = \{(t_j, x_j): t_j < t_i\}$, a set of all the events that came before $t_i$.

One way to specify a TPP is with an intensity function $\lambda(t)$ which tells us about the concentration of points around each time location $t$. A trivial example is the constant intensity which gives rise to a homogeneous Poisson process.
A more expressive \textit{conditional} intensity function incorporates the information of the past events. A completely equivalent parameterization is defining the density function $p(\tau)$, on inter-event (delta) times $\tau_i = t_i - t_{i-1}$, with $t_0 = 0$ \citep{shchur2019intensity}.

When using a density parametrization, the existing machinery makes it very easy to specify the likelihood and the sampling is straightforward. In the following, we use density-based framework in order to define speculative sampling approach for TPPs. On the other hand, non-parametric intensity models require Monte Carlo in training and they use thinning algorithm for sampling, making them less applicable for our approach.

For practitioners, parametric TPPs are often the default choice due to ease of use without making compromises on performance. Our proposed method adheres to this same principle: it is applicable to any existing parametric TPP model without requiring retraining or model modifications, while maintaining exact sampling and accelerating the sampling process.

A parametric TPP model is specified as $p(\tau_i, x_i | \mathcal{H}_{i})$, joint distribution over the next (\textit{i}th) time point and its mark, conditioned on the history. Neural TPP models usually encode the history with a neural network that returns a fixed-sized vector representation $\vh_i \in \R^h$. Examples of encoders include recurrent neural networks (RNNs) \citep{cho2014gru} and  transformers \citep{vaswani2017attention}. 
Our proposed sampling method is model agnostic and we use established models for our experiments.

The model is trained by maximizing the log-likelihood \citep{daley2006introduction}:
\begin{align}\label{eq:loss}
    \log p(\vt, \vx) = \sum_{i=1}^n \log p(\tau_i, x_i | \mathcal{H}_{i}) + \log S (\tau_{n+1} | \mathcal{H}_{n}), \quad \tau_{n+1} = T - \tau_n,
\end{align}
where $S (\tau_{n+1})$ denotes the survival function, the probability that no event occurred since the last ($n$th) event and until the end of the observed interval $T$. 
The likelihood of the whole dataset is a product of all individual sequence likelihoods. 

Conventional sampling from the model starts with the existing events $\{ (t_1, x_1), \dots, (t_n, x_n) \}$. This sequence is processed with a neural network to obtain the distribution of the next delta time $p(\tau_{n+1})$ and mark $p(x_{n+1})$. One can sample $\tau_{n+1}$ and $x_{n+1}$ directly from these distributions, append them to the existing sequence, and repeat this process until some stopping criterion is met.

\subsection{Rejection sampling}

Sampling from a density function is straightforward. Traditionally, TPPs specified with an intensity function use a different approach called thinning \cite{ogata1981lewis}. Given an intensity function $\lambda(t)$ for which we know the upper bound $\lambda(t) < \lambda_{\text{max}}, \forall t$, we can sample points using the following steps:
\begin{enumerate}
    \item Sample candidate points $t_i$ from a proposal homogeneous TPP with intensity $\lambda_{\text{max}}$,
    \item Compute $\lambda(t_i)$ under the target process and draw a random value $u_i \sim \mathrm{Uniform}(0, \lambda_{\text{max}})$,
    \item Keep the point $t_i$ if $u_i < \lambda(t_i)$, else remove the point (thin).
\end{enumerate}

This approach works because the \textit{proposal} process intensity is larger than the \textit{target} process intensity on the whole domain. Then each sample is kept with the probability proportional to the intensity of the target process. This is an example of rejection sampling applied to TPPs.

Rejection sampling is a general method of obtaining samples from a target distribution by accepting and rejecting samples from some proposal distribution; see, e.g., \citet{devroye2006nonuniform} or \citet[][Chapter~11]{bishop2006pattern} for an overview. Given a proposal distribution $g(x)$ and a target distribution $f(x)$, a sample $x \sim g(x)$ is accepted as a sample from $f(x)$ with probability $f(x) / (M g(x))$, where $M$ is the upper bound on the ratio $f/g$. Note that $f$ and $g$ do not have to be normalized, but $g$ does have to dominate $f$, which is why we incorporate $M$ to ensure that the ratio is always lower than 1. In case $f=g$, the probability of acceptance will be 1 since $M=1$. If a proposal distribution is close to the target, we get low rejection rates and efficient sampling.

To summarize, the prerequisite for rejection sampling, given a target density $f(x)$ and a proposal density $g(x)$, is that we can evaluate both $f(x)$ and $g(x)$, we can readily sample from $g(x)$, and that we know $M = \max_x f(x)/ g(x)$. Since most common distributions allow density evaluation and sampling, our focus is on finding the rejection constant.

\section{Method}\label{sec:method}

\subsection{Rejection constant for selected distributions}

The \textbf{categorical distribution} is one example of a distribution where we can evaluate the rejection constant $M$ directly. The distribution is defined on the space of $D$ categories, where each category $x$ has a probability $\smash{p(x)}$ of occurring. For a target and a proposal distribution defined with $\smash{p_T}$ and $\smash{p_P}$, respectively, we obtain $M$ by evaluating the ratio for all $D$ possible values and take the maximum:
\begin{align}\label{eq:categorical_rejection_const}
    M = \max_{x \in \mathcal{X}} \frac{p_T(x)}{p_P(x)}, \quad \mathcal{X}=\{1, 2, \dots, D\} .
\end{align}
Note that some low probability categories can dramatically influence $M$. For example, having a two-class proposal distribution with probabilities $\vp_P=[1-\epsilon, \epsilon]$; and a target distribution with probabilities $\vp_T=[1-c\epsilon, c\epsilon]$, we get rejection constant of $c$, even for very small $\epsilon$. We propose excluding categories with the highest ratios when they contain less than $\delta$ of the target distribution's probability mass. If $\delta$ is small, the resulting samples closely follow the target distribution. In fact, we guarantee an upper bound on the total variation distance between the approximated and true distributions to be equal $\delta$. \Appref{app:improved_rejection_cat} provides a more rigorous treatment with an implementation.

This means we can choose to use truly exact sampling or introduce a controllable error into the sampling procedure for better performance.

For an \textbf{exponential distribution} whose probability density function (PDF) is given by $f(x; \lambda) = \lambda e^{-\lambda x}$, $\lambda > 0$, we can write the ratio of two exponential density functions as:
\begin{align*}
    \frac{f_T(x)}{f_P(x)} = \frac{\lambda_T}{\lambda_P} e^{(\lambda_P - \lambda_T) x}.
\end{align*}
This is a monotonically decreasing function when $\lambda_P < \lambda_T$, giving us the maximum ratio at $x^\star=0$. However, when $\lambda_P > \lambda_T$, the ratio is unbounded and we cannot evaluate it for all the points on the domain. One way to solve this is to restrict the domain similar to the categorical distribution. For example, we can provide coverage up to 99th percentile of $f_T(x)$, evaluating the ratio at this point.

Not all distributions permit such straightforward analysis, examples include most distribution mixtures. Because of this we devise an alternative approach, described in the following.

\subsection{General rejection constant}

\begin{figure}[t]
    \centering
    \includegraphics[width=\linewidth]{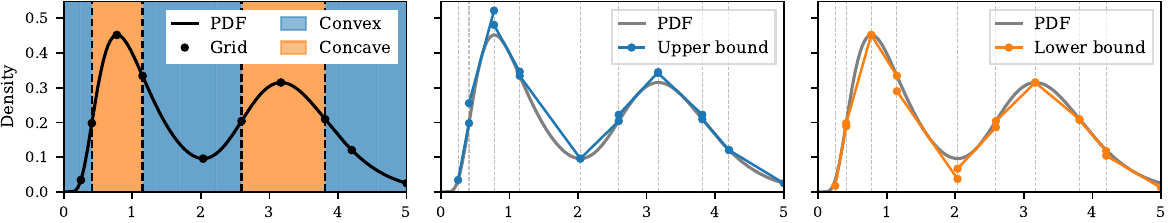}
    \caption{Illustration of upper and lower linear bounds of a mixture distribution with two components. The grid points always include inflections which makes the rejection sampling exact.}
    \label{fig:bounds_construction}
\end{figure}

The main idea is to upper bound the target density function and lower bound the proposal. Since we are free to choose bounding functions, we will choose those that allow us straightforward evaluation of the ratio between them, which consequently gives us a way to find the rejection constant (Theorem~\ref{thm:approx_const}).

We decide to approximate a density function with a piecewise linear function, which will allow us simple computation of the rejection constant. This is similar to an envelope construction which has been previously studied for log-concave distributions \citep{gilks1992adaptive}. The difference to previous works is that our general method works with many choices of densities, including mixtures of distributions, and is efficient to compute on modern hardware. See \Secref{sec:related} for further discussion.

We construct linear segments on a grid $\{x_0, x_1, \dots, x_n\}$. For example, exponential density is convex so it can be upper bounded by the line segment connecting $(x_i, f(x_i))$ and $(x_{i+1}, f(x_{i+1}))$. Alternatively, it can be lower-bounded by a tangent $f'(x_{m})$ passing through a midpoint $\smash{x_m = \frac{x_i + x_{i+1}}{2}}$, for all segments. If the function is concave, these bounds are reversed. From this we can bound any well-behaved density by simply decomposing its domain into convex and concave regions (Lemma~\ref{lemma:upper_lower_bounds}). A valid grid needs to, at a minimum, contain all the inflection points.

This approach extends naturally to mixture distributions. For a mixture $f(x) = \sum_{i=1}^k w_i f_i(x)$ with weights $w_i \geq 0$ and $\smash{\sum_{i=1}^k w_i = 1}$, we can construct upper and lower bounds for each component $f_i(x)$ and then linearly combine them. More precisely, we use a weighted sum of the components' bounds (Lemma~\ref{lemma:mixture_bounds}). To make the implementation efficient, mixture components share the grid, which in turn simplifies combining them by summing up the values of the grid points.

\begin{theorem}[\textbf{Rejection constant using linear density approximation}]\label{thm:approx_const} 
Let $f_T(x)$ be a target density with piecewise linear upper bound $g_T(x)$ constructed using grid points $\{x_0, x_1, \ldots, x_n\}$ as described above, and let $f_P(x)$ be a proposal density with a lower bound $h_P(x)$, on the same grid. Then, the upper bound on the rejection sampling constant $M$ is given by: $\smash{\tilde{M} = \max_{i \in \{0,1,\ldots,n\}} \frac{g_T(x_i)}{h_P(x_i)}}$. 
\end{theorem}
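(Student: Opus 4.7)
The plan is to combine two ingredients: a pointwise ratio bound that replaces $f_T/f_P$ by $g_T/h_P$, and a monotonicity argument that lets us evaluate $g_T/h_P$ only on grid points.

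First, I would establish the pointwise inequality. By construction $0 \le f_T(x) \le g_T(x)$ and $0 < h_P(x) \le f_P(x)$ on the domain (we need $h_P$ strictly positive; I would state this as an implicit assumption, noting that otherwise the claim becomes vacuous or one restricts to the support where both bounds are positive). Dividing the two inequalities gives
\begin{equation*}
\frac{f_T(x)}{f_P(x)} \;\le\; \frac{g_T(x)}{h_P(x)} \qquad \text{for all } x,
\end{equation*}
so $M = \sup_x f_T(x)/f_P(x) \le \sup_x g_T(x)/h_P(x)$.

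Second, I would show that this second supremum is attained at a grid point. Fix any segment $[x_i, x_{i+1}]$. On that segment both $g_T$ and $h_P$ are affine, say $g_T(x)=a_1 x + b_1$ and $h_P(x)=a_2 x + b_2$. A direct differentiation gives
\begin{equation*}
\frac{d}{dx}\frac{a_1 x + b_1}{a_2 x + b_2} \;=\; \frac{a_1 b_2 - a_2 b_1}{(a_2 x + b_2)^2},
\end{equation*}
whose sign is constant on the segment (the denominator is positive since $h_P>0$). Hence the ratio is monotone on $[x_i,x_{i+1}]$ and its maximum over that segment is attained at one of the endpoints $x_i$ or $x_{i+1}$. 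Taking the maximum over all segments gives $\sup_x g_T(x)/h_P(x) = \max_{i\in\{0,\ldots,n\}} g_T(x_i)/h_P(x_i) = \tilde M$. Combining with the previous step yields $M \le \tilde M$, which is the claim.

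The only delicate step is justifying $h_P>0$ on the region of interest; this is really a condition on how the grid and lower-bounding tangents are chosen (Lemma~\ref{lemma:upper_lower_bounds} presumably guarantees nonnegativity, but rejection sampling additionally requires strict positivity wherever $f_T>0$). I would flag this as a mild, and easily satisfied, assumption rather than a deep obstacle. The extension to mixtures is automatic because Lemma~\ref{lemma:mixture_bounds} already provides linear bounds $g_T, h_P$ of the required form on a shared grid, so the same argument applies verbatim.
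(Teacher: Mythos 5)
Your proof is correct and follows essentially the same route as the paper's: the pointwise ratio bound $f_T/f_P \le g_T/h_P$ (the paper's Lemma~\ref{lemma:bound_ratio}) combined with the observation that a ratio of affine functions is monotone (or constant) on each segment, so the supremum is attained at a grid point. Your explicit flagging of the requirement $h_P>0$ on the region of interest is a small but worthwhile addition that the paper leaves implicit.
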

\begin{proof}
    \Appref{app:approx_const_proof} contains the full proof. We first show that bounding the densities bounds the rejection constant. Then, we show that the linear segments, as described, correctly bound a density function. Finally, we exploit monotonicity of the linear approximation to prove the main claim.
\end{proof}

This is a general approach for finding an upper bound on the true rejection constant which works for many real-world distributions. In \Secref{sec:common_distributions} we list some examples. \Figref{fig:bounds_construction} shows an example construction of bounds. The method is particularly effective because it requires evaluating the bounds only at the grid points, making it computationally efficient even for complicated distributions. 

The algorithm for finding the rejection constant is described in detail in \Appref{app:algo}. First, Algorithm~\ref{alg:tangent_algo} returns linear segments given the distribution and the grid of points. For that we need to be able to evaluate the density function and its derivative. Then, Algorithm~\ref{alg:envelope_algo} returns the rejection constant given the linear segments. All operations can be computed efficiently, in parallel.

\subsection{Cataloging common distributions}\label{sec:common_distributions}

Recall the exponential PDF is given by $f(x; \lambda) = \lambda e^{-\lambda x}$. We can easily show this function is convex by computing the second derivative $f''(x) = \lambda^3 e^{-\lambda x}$ and noticing this is always positive. In this case, to define the grid for the construction of linear segments, we can choose any arbitrary set of points. However, most densities are not strictly convex so this will not work in general. What we do, instead, is find the intervals on which the density is either concave or convex.

The intervals are bounded by inflection points, which we get by solving $f''(x)=0$. This is possible to find for most distributions. In \Tabref{tab:distributions} we show some commonly used distributions, along with their first derivative and inflection points. The full derivation of all the terms is provided in \Appref{app:exp_dist} for exponential distribution, in \ref{app:gamma_dist} for Gamma, \ref{app:lognorm_dist} for log-normal, and in \Appref{app:weibull_dist} for Weibull.

\begin{table}[t]
    \centering
    \begin{tabular}{lccc}
         Distribution & PDF $f(x)$ & Derivative $f^\prime(x)$ & Inflection points \\
         \midrule
         Exponential & 
         $\lambda e^{-\lambda x}$
         & $-\lambda f(x)$ 
         & $\emptyset$ 
         \\
         Gamma & 
         $  \frac{\beta^\alpha}{\Gamma(\alpha)} x^{\alpha - 1} e^{-\beta x}$
         &  $\left( \frac{\alpha - 1}{x} - \beta \right)f(x)$
         &  $\frac{\alpha - 1 \pm \sqrt{\alpha - 1}}{\beta}$
         \\
         Log-normal &  
         $\frac{1}{x \sigma \sqrt{2\pi}} e^{-\frac{(\log x - \mu)^2}{2\sigma^2}}$
         &  $\left( -\frac{1}{x} - \frac{\log x - \mu}{\sigma^2 x} \right)f(x)$
         &  $e^{\mu + \frac{\sigma^2}{2} \left(-3 \pm \sqrt{1 + \frac{4}{\sigma^2}}\right)}$
         \\
         Weibull & 
         $\frac{k}{\lambda} \left( \frac{x}{\lambda} \right)^{k - 1} e^{-\left( \frac{x}{\lambda} \right)^k}$
         &   $\left( \frac{k - 1}{x} - \frac{k}{\lambda} \left( \frac{x}{\lambda} \right)^{k - 1} \right) f(x)$
         &   $2^{-\frac{1}{k}} \lambda \left( \frac{-3 + 3k \pm  \sqrt{5k^2 - 6k + 1}}{k} \right)^{\frac{1}{k}}$
         \\
    \end{tabular}
    \caption{Commonly used distributions in TPP models.}
    \label{tab:distributions}
\end{table}

\subsection{Efficient sampling algorithm}\label{sec:efficient_sampling_algo}

Our proposed method leverages a two-step approach involving renewal sampling and acceptance/rejection to efficiently sample multiple future events.

\textbf{Renewal sampling.} The existing encoder has been trained to predict the distribution of the next event based on historical data. Let $\mathcal{H}_i$ denote the history of events up to event time $t_{i-1}$. The encoder represents the sequence with the state $\vh_i$ and predicts the distribution of the next event $p(\tau_i, x_i | \mathcal{H}_{i})$. This prediction serves as a proposal distribution, allowing us to generate multiple future events simultaneously. We denote the set of $l$ proposed future events with $ \mathcal{S} = \{ (\tau_{i+1}, x_{i+1}), \dots, (\tau_{i+l}, x_{i+l}) \}$.

\textbf{Sample acceptance.} We accept proposed samples until we encounter the first event where the proposal and target distributions diverge. The model processes all generated samples $\mathcal{S}$ in parallel, producing hidden states $\vh_{i+1}, \vh_{i+2}, \ldots, \vh_{i+l} $, and target distributions $ p^*(\tau_{i+1}, x_{i+j} | \mathcal{H}_{\tau_{i+j}}) $, one for each proposed sample $i+j$. Next, we compute the rejection constants $M_{i+j}$ based on the proposal distribution $p(\tau_i, x_i | \mathcal{H}_i)$ and the target distribution $p^*(\tau_{i+j}, x_{i+j} | \mathcal{H}_{i+j})$.

The samples are then independently flagged as accepted or discarded based on the following criterion: for each proposed sample $(\tau_{i+j}, x_{i+j}) $, we accept it with probability
\begin{align*}
    \mathrm{P}_{i+j} = 
    \frac{p^*(\tau_{i+j}, x_{i+j} | \mathcal{H}_{i+j})}
    {M_{i+j}  p(\tau_i, x_i | \mathcal{H}_i)} .
\end{align*}
We find the first $j \in \{1, \dots, l \}$ such that $(i+j+1)$th event is discarded. We then discard all the events after $i+j$, set the hidden state of the model to $\vh_{i+j}$ and repeat the above procedure. This results in \textit{exact} sampling from the model.
This is easy to show by examining the chain of validity that governs sequential events in TPPs. When sampling from the target distribution, each accepted event depends on having a valid history of previous events. The rejection constant precisely quantifies the maximum ratio between the target density (conditioned on the true, evolving history) and the proposal density (conditioned on the static initial history). By stopping at the first rejection, we ensure that every accepted sequence follows \textbf{the exact} conditional structure of the target distribution.

All of the described steps can be computed in parallel. The efficiency in terms of memory requirements will depend on the choice of $l$ and the average rejection rate. Therefore, $l$ can be adjusted on the fly to maximize efficiency. Note that the first event at step $i+1$ will always be accepted, since the proposal and target distributions are identical. The overhead computations of this method compared to a conventional sampling scheme is in obtaining the constants $M_{i+j}$.

\textbf{Batching.}
In case of batched samples, we keep track of the shortest sequence in the batch and continue generating events for all sequences until the shortest one is completed. Assuming that the sequences come from the same process, we expect that the rejection rate will be similar over longer simulation horizons. Then, the extra samples that are generated for some sequences are trimmed. An alternative implementation can use packed sequences to dynamically exclude completed sequences.

Algorithm~\ref{alg:renewal_sampling} shows the proposed method. For clarity, wherever $\bullet_{i+j}$ is used, it is implied that all values $\smash{\{ \bullet_{i+j} \}_{j=1}^l}$ are computed in parallel. We split the model into an encoder, which only outputs the hidden states; and the decoder, that outputs distributions.
To further enhance the sampling efficiency, we can adopt a non-exact approach. A variant of Algorithm~\ref{alg:renewal_sampling}, which we evaluate in \Secref{sec:experiments}, utilizes the $k$th rejection instead of the first. Other non-exact methods may involve different approximations.

\begin{figure}
    \begin{minipage}[t]{0.4\textwidth}
        \vspace{-0.25cm}
        \refstepcounter{algorithm} % Increment algorithm counter
        \textbf{Algorithm \thealgorithm} \textbf{Efficient event sampling.}
        \label{alg:renewal_sampling}
        \looseness=-1
        Let's examine a concrete example. If we have historical data up to the current time, we might want to generate 100 future events using speculative step of 3.
        First, the encoder processes history into a hidden state $\vh_0$.
        The decoder outputs proposal distribution $p(\tau_1, x_1|\vh_0)$.
        We sample 3 events from this, e.g., ``email'' at $\tau_1=0.5$, ``call'' at $\tau_2=1.7$, and  ``meeting'' at $\tau_3=0.2$.
        Then, for each sample, we compute its updated hidden state (how history would look if it occurred), target distribution $p^\star$, and the rejection constant.
        After computing the acceptance probability, we might reject the third event and append the sequence with the accepted events. We repeat this process, starting from $\vh_2$, until we generate all future events.
    \end{minipage}\hfill
    \begin{minipage}[t]{0.57\textwidth}
        \input{tables/main_algo}
    \end{minipage}
    \vspace{-0.2cm}
\end{figure}

\section{Related work}\label{sec:related}

Envelope methods \citep{devroye1984simple,gilks1992adaptive} construct the upper bound on the log-density in order to take samples from the upper bound which are then accepted w.r.t.\ the true density. On the other hand, we can sample directly from our densities, that is, our method uses bounded functions only to compute the rejection constant. Doing this, we obtain exact sampling and speed up the process by parallelizing the ancestral sampling. \citet{gorur2011conc} propose a concave-convex approximation similar to ours, however, our Algorithm~\ref{alg:renewal_sampling} is less expensive and readily parallelizable.

Most TPP models are autoregressive: Hawkes process \citep{hawkes1971spectra} defines excitement through adjacency matrix and exponential kernel; RMTPP \citep{du2016recurrent} uses RNN encoder and a simple distribution for delta times; Neural Hawkes \citep{neuralhawkes} combines RNNs with Hawkes intensity; and intensity-free models \citep{shchur2019intensity} generalize to any distribution. Other research has explored modifications to the encoder \citep{zuo2020transformer,chen2018neural}; and decoder, incorporating different intensity functions \citep{omi2019fully,ludke2023add}.

\citet{xue2024easytpp} present a benchmark for TPPs that consolidates existing models and datasets. Their findings reveal that neural models outperform classical TPPs, with small performance variation among different models. Notably, intensity-free models still achieve state-of-the-art results.
\citet{karpukhin2024hotpp} suggest that predicting a full window is a strong baseline for long-horizon forecasting, however, autoregressive models remain the dominant paradigm in literature and practice.
\citet{xue2022hypro} enhance long-horizon predictions through a hybrid model that combines an autoregressive base with an energy function for reweighing. In contrast, our work focuses on improving sampling efficiency; it can be combined with this method to enable parallel sampling of multiple next events.

Speculative decoding \citep{stern2018blockwise} has resurfaced in LLMs as a way to accelerate sampling \citep{qi2020prophetnet,gloeckle2024better}. LLM training and sampling is similar to TPP models, and by extension to other autoregressive models, like those from time series forecasting. Our method can be directly applied to text generation, however, it is not suitable for this task since in language domain, consecutive token distributions vary considerably. So, while the two approaches share similarities, a key distinction is that we do not require learning to predict multiple next steps, allowing us to apply our method to existing models without retraining. This is not possible to achieve in the text domain.

\section{Experiments}\label{sec:experiments}

In \Secref{sec:motivating_examples} we motivate our method with two synthetic examples which have strong connection to real world problems. \Secref{sec:non_stationary} shows that real-world data is not-stationary meaning our results in \Secref{sec:faster_sampling} cannot be replicated with a simpler underlying model. \Secref{sec:speed} shows the empirical runtime improvements. Finally, in \Secref{sec:lob} we show a study on a real-world problem in finance.

In our study we analyze seven standard event sequence datasets: Amazon \citep{ni2019justifying}, Earthquake (EQ) \citep{xue2024easytpp}, Retweet \citep{zhou2013learning}, Stack Overflow (SO) \citep{leskovec2016snap}, Taobao \citep{xue2022hypro}, and Taxi \citep{whong2014foiling}, processed by \citet{xue2024easytpp}; and Reddit \citep{kumar2018community}.
These datasets have different properties and are a good representation of the real-world data. Sequences can have hundreds of events and mark dimensions range from single digit to 985. Data is described in detail in \Appref{app:data}, see e.g., \Tabref{tab:data_stats} and \Figref{fig:data_log_delta}.

\subsection{Motivating examples}\label{sec:motivating_examples}

\textbf{Multivariate Hawkes Process.} Marks in a TPP may or may not interact with each other, leading to different sparsity of the adjacency matrix. For instance, in a social network, messages can be treated as events, and not all users have to be connected. To simulate this we generate data from a multivariate Hawkes process \citep{hawkes1971spectra,bacry2017tick}. Adjacency matrix is uniformly sampled with values from 0 to $A_\text{max}$ and a percentage of values is set to 0. We vary sparsity from 10\% to 90\%, dimension from 10 to 80, $A_\text{max}$ from 0.05 to 1, and the decay factor is either 0.2 or 1.

A single experiment configuration defines a Hawkes process taking the values from the above ranges. We use the true intensity function to compute the rejection constants, acceptance rates, and average accepted step size. We use a maximum speculative step size of 5 and average the results over 30 runs.

\Figref{fig:synhetic-hawkes-results} shows the average accepted step. We can see that increasing the dimension, connectedness and adjacency strength decreases the acceptance ratio. Crucially, we show that even with a high dimension and low sparsity we obtain good acceptance rates. This confirms the practical utility of our method for diverse TPPs and gives us confidence that it can be applied in real-world tasks.

\begin{figure}
    \centering
    \includegraphics[width=\linewidth]{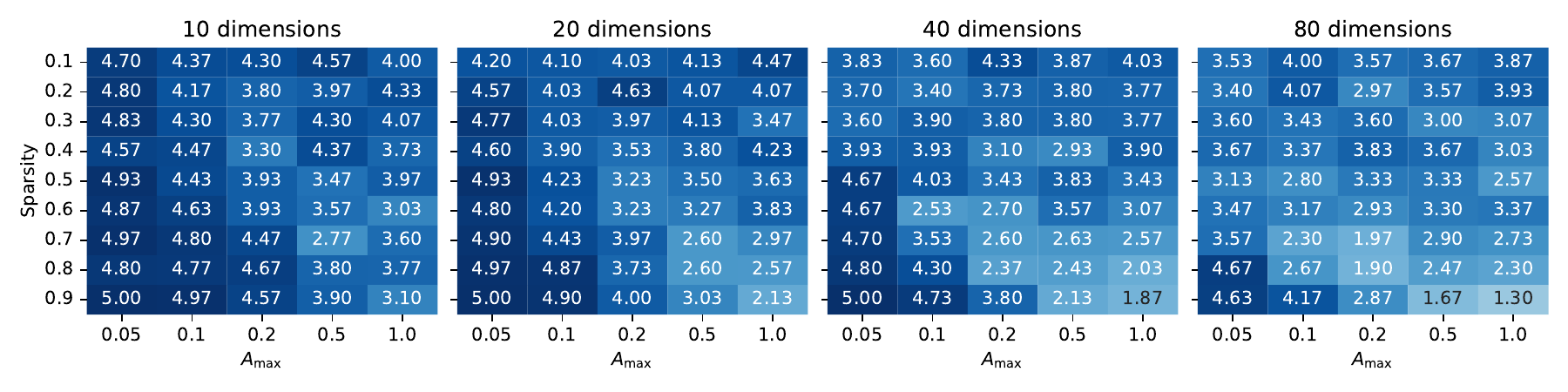}
    \caption{Average accepted step (out of 5 proposed events) for different configurations of multivariate Hawkes process. Lower sparsity and larger adjacency values define stronger mark interactions leading to lower acceptance ratio, nevertheless, all configurations show good acceptance ratios.}
    \label{fig:synhetic-hawkes-results}
\end{figure}

\textbf{Jump Process.} We consider a process that cycles through periods of different constant intensities. For instance, server logs exhibit such behavior; some jobs print at a constant rate, but we can have multiple jobs that start and end at random times, each having its own intensity.
This can be described by a jump process: behaving as a renewal process most of the time but experiencing random jumps in intensity function. We construct the data by stitching together sequences from different homogeneous processes by first sampling interval durations and then sampling random intensities which finally generate events on the interval. We use GRU \citep{cho2014gru} with an exponential distribution.

Given initial sequences, we generate 100 new realizations with length 3000.
\Figref{fig:example_server} in \Appref{app:jump_results} illustrates different samples from the model based on one input sequence, demonstrating the model's ability to generate long intervals of constant intensity with sudden changes. The acceptance rate on the whole test data remains consistently around 90\%, with an average accepted step length of 13.

\subsection{Real data is usually non-stationary}\label{sec:non_stationary}

If the data originates from a stationary process, we could learn a single distribution for the next event, disregarding all historical context, which would automatically simplify sampling. This would be the same as learning a renewal TPP model. We test this hypothesis by evaluating whether incorporating history significantly enhances modeling quality. We consider three scenarios: (1) full history—using the canonical TPP with complete past context; (2) Markov—utilizing only the most recent event to predict the next; and (3) no history—a TPP with a stationary distribution unaffected by past events.

The training setup is explained in \Appref{app:non_stationary}. \Tabref{tab:stationary_results} shows the likelihood, mark accuracy, and time RMSE results averaged over five runs. We include the most common class prediction to demonstrate that models without full history mainly learn the marginal distribution. Full history model is clearly beneficial, indicating that real data is predominantly non-stationary. Consequently, an alternative to our method cannot be a renewal process, as it would significantly reduce performance.

\subsection{Long-horizon sampling on benchmark data}\label{sec:faster_sampling}

We use standard benchmark datasets as described at the start of \Secref{sec:experiments}. Our model has a GRU encoder with 256 hidden dimension. The decoder is a log-normal mixture distribution with 32 components. After training is completed, the models are used to generate new realizations starting from the initial sequences that are given in the held-out test set. We generate the sequences using regular one-by-one sampling which is the \textit{ground truth}, and compare it to speculative sampling using our linear estimation of the rejection constant. We additionally do speculative sampling with Monte Carlo estimation of the rejection constant, the results are discussed in \Appref{app:monte_carlo_rejection_const}.

For each sequence, we take 10 samples, each consisting of 100 events, and use a speculative step of 5. We also implement approximate schemes for further improvements in sampling efficiency, namely, top-k sampling where we accept all events up to the $k$th event flagged for rejection. We measure the average acceptance step and various distances between empirical distributions of events from the conventional and speculative sampling. We report KL-divergence between marks, maximum mean discrepancy of arrival times, and log-likelihood ratio; all formally defined in \Appref{app:metrics}.

\Tabref{tab:main_result} shows the distances between the true sampling distribution and the speculative samples. The results demonstrate that large speculative steps are achieved for most datasets without compromising sampling quality. The only exception is the Taxi dataset, which has specific structure where marks alternate between two values, i.e., mark 1 always follows mark 2 and vice versa, limiting the effectiveness of our approach in this specific case. This results in large rejection rates of the categorical distribution but it can be fixed by augmenting the proposal distribution. Additional results with error bars and other metrics are in \Tabref{tab:gru_all_results}. For results using different encoders, such as transformer and convolutional neural network, see \Appref{app:additiona_results}.

\begin{table}[b]
    % \small
    \caption{Quality of samples and average accepted step for different top-k, compared to ground truth.}
    \label{tab:main_result}
    \centering
    \begin{adjustbox}{width=\linewidth}
    \begin{tabular}{lcccc|cccc|cccc|ccc}
     & \multicolumn{4}{c}{MMD} & \multicolumn{4}{c}{KL-divergence} & \multicolumn{4}{c}{Log-likelihood ratio} & \multicolumn{3}{c}{Step} \\
     & True & Top-1 & Top-2 & Top-3 & True & Top-1 & Top-2 & Top-3 & True & Top-1 & Top-2 & Top-3 & Top-1 & Top-2 & Top-3 \\
    \hline
    Amazon & 0.2 & 0.2 & 0.19 & 0.2 & 7.26 & 7.33 & 7.26 & 7.28 & 0.02 & -0.02 & -0.03 & -0.05 & 2.8905 & 5.9309 & 8.8409 \\
    EQ & 0.19 & 0.2 & 0.19 & 0.18 & 3.9 & 3.98 & 3.99 & 3.86 & 0.04 & -0.13 & -0.05 & 0.03 & 3.0387 & 6.1882 & 9.1424 \\
    Reddit & 0.19 & 0.19 & 0.19 & 0.2 & 6.36 & 6.4 & 6.45 & 6.26 & -0.06 & -0.15 & -0.16 & -0.21 & 2.1848 & 4.3222 & 6.5014 \\
    Retweet & 0.2 & 0.19 & 0.19 & 0.19 & 1.89 & 1.92 & 1.87 & 1.99 & 0.02 & 0.02 & 0.01 & -0.03 & 3.7707 & 8.2337 & 12.6512 \\
    SO & 0.19 & 0.18 & 0.19 & 0.18 & 6.93 & 6.97 & 6.93 & 7.0 & 0.01 & -0.01 & -0.06 & -0.07 & 2.1695 & 4.282 & 6.225 \\
    Taobao & 0.2 & 0.2 & 0.2 & 0.2 & 8.33 & 8.59 & 8.34 & 8.63 & 0.02 & -0.07 & -0.21 & -0.24 & 1.7783 & 3.4225 & 5.0025 \\
    Taxi & 0.19 & 0.19 & 0.21 & 0.21 & 2.64 & 2.71 & 6.38 & 6.27 & 0.02 & 0.02 & 2.68 & 2.81 & 1.0003 & 2.1283 & 3.0359 \\
    \end{tabular}
    \end{adjustbox}
\end{table}

\begin{table}[t]
    \caption{Average total time (in ms) for speculative sampling, compared to a conventional method.}
    \label{tab:runtime}
    \centering
    \begin{adjustbox}{width=\linewidth}
    \begin{tabular}{lcccc|cccc|cccc}
     & \multicolumn{4}{c}{Encoder (process history)} & \multicolumn{4}{c}{Decoder (output distributions)} & \multicolumn{4}{c}{Sample next event} \\
     & True & Top-1 & Top-2 & Top-3 & True & Top-1 & Top-2 & Top-3 & True & Top-1 & Top-2 & Top-3  \\
    \hline
    Amazon & 36.36 & 16.66 & 11.4 & 7.47 & 42.04 & 37.04 & 23.11 & 12.87 & 47.84 & 12.81 & 8.15 & 4.74 \\
    EQ & 46.53 & 20.61 & 10.99 & 7.82 & 54.09 & 45.83 & 21.32 & 13.4 & 62.69 & 16.11 & 7.78 & 5.09 \\
    Reddit & 37.89 & 25.05 & 19.2 & 13.81 & 43.41 & 55.34 & 36.2 & 22.94 & 49.53 & 19.22 & 13.54 & 9.1 \\
    Retweet & 39.24 & 13.81 & 7.73 & 5.94 & 44.13 & 29.31 & 14.11 & 9.4 & 50.35 & 10.27 & 5.1 & 3.53 \\
    SO & 49.02 & 29.85 & 17.36 & 13.13 & 50.61 & 59.83 & 30.87 & 21.46 & 60.72 & 22.35 & 12.3 & 8.88 \\
    Taobao & 49.06 & 37.56 & 22.12 & 16.22 & 50.71 & 76.34 & 40.45 & 27.12 & 60.79 & 28.12 & 15.71 & 10.99 \\
    Taxi & 49.18 & 54.33 & 29.86 & 22.62 & 51.13 & 111.12 & 55.5 & 38.98 & 61.03 & 40.71 & 21.7 & 15.61 \\
    \end{tabular}
    \end{adjustbox}
\end{table}

\subsection{Runtime improvements}\label{sec:speed}

Having demonstrated that speculative sampling generates equivalent samples to the true process while accepting multiple events per iteration, we now quantify the computational benefits through wall-clock time measurements (with hardware specifications in \Appref{app:hardware}). We break down the sampling procedure into its key components as outlined in Algorithm~\ref{alg:renewal_sampling}. For instance, ``Encoder'' measurement represents the total time spent on encoder computations. 

\Tabref{tab:runtime} presents the timing comparison, additional detailed results are available in \Appref{app:additiona_results}. The measurements demonstrate substantial speedup, particularly for datasets with higher acceptance rates. This performance gain stems from more efficient utilization of parallel computing capabilities in modern hardware. Since neural network operations on batched data incur similar overhead regardless of batch sizes, our approach achieves better throughput by reducing the total number of separate calls.

The rejection constant runtime depends on the implementation and exact parameters. The current implementation prioritizes clarity and didactic value rather than computational efficiency, while other modules leverage optimized native operations. Despite this, we achieve significant overall speedup.

\subsection{Real-world application: Limit order books}\label{sec:lob}

Limit order book (LOB) records outstanding buy and sell orders for an asset, organized by price level. Real-time messages, such as new orders or cancellations, are processed based on price and time priority, ensuring efficient trade execution. Messages can be modeled with a TPP, where individual order timings are arrival times and their contents are marks. Accurately modeling LOB messages is essential for enhancing trading strategies.
We use publicly available data for the MSFT symbol from a single day,\footnote{\url{https://lobsterdata.com/info/DataSamples.php}} comprising 600k messages with various features, detailed in \Appref{app:lobster}. The data is divided into training sequences of length 200, with four mark types.

A unique challenge arises as messages can arrive simultaneously which occurs in about 8\% of the data, conflicting with the assumptions of a simple point process. To address this, we quantize delta times using quantile bins, allowing for zero values. The model then predicts one categorical distribution for inter-arrival times and another categorical distribution for marks.
We simulate 100 future steps 10 times, achieving an average speculative step size of 2.82, and 5.19 for top-2 sampling. \Figref{fig:lob_samples} shows empirical samples using different sampling methods. In \Appref{app:lob_experiment} we show some \textit{stylized facts} that demonstrate that the samples generated with a speculative scheme have the same properties as the conventional sampling. \Figref{fig:transition_matrices} shows transition matrices and \Figref{fig:lob_cumulative} cumulative counts.

\begin{figure}[t]
    \centering
    \includegraphics[width=0.9\linewidth]{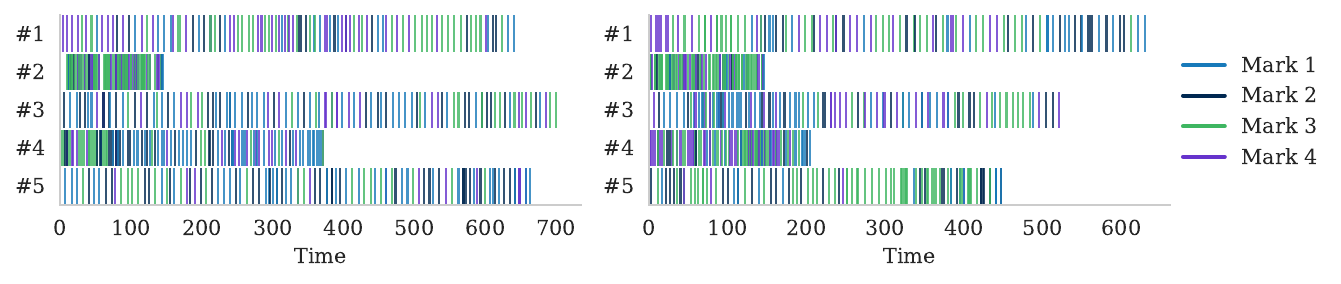}
    \vspace{-0.5cm}
    \caption{Limit order book samples. (Left) Samples generated with a conventional autoregressive method. (Right) Samples generated with a speculative sampling method.}
    \label{fig:lob_samples}
\end{figure}

\section{Discussion}\label{sec:discussion}

In this paper, we introduce a novel method that improves the simulation efficiency of TPPs by enabling parallel sampling of multiple next events, which is crucial for rapid event sequence generation. Notably, our method is effective even with non-renewal data, allowing for broad applicability without altering existing models. Our approach utilizes envelope approximation, which can be applied to a wide range of functions, including most common distribution choices and their mixtures.
Our approach can be adapted to various model definitions, and we offer implementations for most common choices. Future work might explore better proposal distributions to further boost efficiency.

\bibliography{references}
\bibliographystyle{iclr2026_conference}

\newpage
\appendix

\section{Theoretical results}

\subsection{Proof of Theorem~\ref{thm:approx_const}}\label{app:approx_const_proof}

First, we show some results that will help us prove the main claim.

\begin{lemma}\label{lemma:bound_ratio}
    Let $g_T(x)$ be the upper bound of the target density $f_T(x)$, and let $h_P(x)$ be the lower bound of the proposal density $f_P(x)$. Then the ratio $\smash{\tilde{M} = \max_x \frac{g_T(x)}{h_P(x)}}$ is the upper bound of the rejection sampling constant $\smash{M = \max_x \frac{f_T(x)}{f_P(x)}}$.
\end{lemma}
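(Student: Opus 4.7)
The plan is to prove the lemma by a direct pointwise comparison of the two ratios, followed by taking the maximum over $x$. Because the statement is essentially a monotonicity fact about ratios of nonnegative functions, no deep machinery is needed; the main things to be careful about are non-negativity of densities and the fact that the lower bound $h_P$ must remain strictly positive on the support of $f_T$ so that every ratio in sight is well defined.

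First I would fix an arbitrary $x$ in the domain and invoke the two bounding hypotheses: $f_T(x) \le g_T(x)$ from the upper bound on the target, and $0 < h_P(x) \le f_P(x)$ from the lower bound on the proposal. Since these are all nonnegative, dividing preserves inequalities, and I can chain
\begin{equation*}
\frac{f_T(x)}{f_P(x)} \;\le\; \frac{g_T(x)}{f_P(x)} \;\le\; \frac{g_T(x)}{h_P(x)}.
\end{equation*}
This is the one-line pointwise core of the argument.

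Next I would take the supremum over $x$ on both sides. The right-hand side is dominated uniformly by $\tilde{M} = \max_x g_T(x)/h_P(x)$, so
\begin{equation*}
M \;=\; \max_x \frac{f_T(x)}{f_P(x)} \;\le\; \max_x \frac{g_T(x)}{h_P(x)} \;=\; \tilde{M},
\end{equation*}
which is exactly the claim.

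The only subtlety, and what I would flag explicitly, is the well-posedness of the ratios: if $h_P(x) = 0$ at some point where $f_T(x) > 0$, then $\tilde{M}$ is infinite and the statement is vacuous but still valid (infinity is an upper bound); otherwise, standard rejection sampling already requires $f_P$ to dominate $f_T$ on its support, so $h_P > 0$ on the support of $f_T$ is a benign assumption inherited from that setting. There is no real obstacle here — the result is essentially immediate from the definitions — and the lemma mainly serves as a stepping stone so that Theorem~\ref{thm:approx_const} can restrict attention to the bounding functions rather than the original densities.
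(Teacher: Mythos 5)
Your proof is correct and follows the same route as the paper's: a pointwise comparison $f_T(x)/f_P(x) \le g_T(x)/h_P(x)$ followed by taking the maximum over $x$. The extra remark about $h_P$ needing to be positive (or the bound being vacuously true when $\tilde{M}=\infty$) is a reasonable point of care that the paper leaves implicit, but it does not change the argument.
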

\begin{proof}
    Since $f_T(x) \leq g_T(x)$ and $h_P(x) \leq f_P(x)$, we know $\frac{f_T(x)}{f_P(x)} \leq \frac{g_T(x)}{h_P(x)}$, which finally gives $\smash{M = \max_x \frac{f_T(x)}{f_P(x)} \leq \max_x \frac{g_T(x)}{h_P(x)} = \tilde{M}}$.
\end{proof}

\begin{lemma}\label{lemma:upper_lower_bounds}
For any density function $f(x)$ and grid $\{x_0, x_1, \ldots, x_n\}$, $f$ can be approximated by a piecewise linear function where, in convex regions $[x_i, x_{i+1}]$, $f$ is upper-bounded by the line segment connecting $(x_i, f(x_i))$ and $(x_{i+1}, f(x_{i+1}))$ and lower-bounded by the line with slope $f'(x_{m})$ at $\smash{x_m = \frac{x_i + x_{i+1}}{2}}$ passing through $(x_m, f(x_m))$. In concave regions, these bounds are reversed. Any function can be bounded by decomposing its domain into convex and concave regions.
\end{lemma}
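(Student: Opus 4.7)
The plan is to reduce the lemma to two textbook facts about convex functions: the secant inequality (secant lines lie above the graph) and the tangent inequality (tangent lines lie below the graph). Working first in a convex region $[x_i, x_{i+1}]$, I would establish the upper bound by writing an arbitrary $x \in [x_i, x_{i+1}]$ as $x = \lambda x_i + (1-\lambda) x_{i+1}$ with $\lambda \in [0,1]$, and applying the defining inequality of convexity:
\begin{equation*}
    f(x) \;\leq\; \lambda f(x_i) + (1-\lambda) f(x_{i+1}),
\end{equation*}
which is exactly the line segment connecting $(x_i, f(x_i))$ and $(x_{i+1}, f(x_{i+1}))$. For the lower bound I would invoke the supporting-line characterization of convexity applied at the midpoint $x_m$: for any differentiable convex $f$ and any $y$ in its domain,
\begin{equation*}
    f(y) \;\geq\; f(x_m) + f'(x_m)(y - x_m),
\end{equation*}
which is precisely the tangent line claimed in the statement.

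For a concave region the same argument applies verbatim to $-f$, so both inequalities flip, giving the tangent at the midpoint as the upper bound and the secant through the endpoints as the lower bound. To handle a general density, I would decompose its domain into maximal intervals of constant concavity sign; the boundaries between these intervals are by definition the inflection points, where $f''$ changes sign. Since the lemma's hypothesis requires the grid to contain every inflection point, every grid interval $[x_i, x_{i+1}]$ lies entirely within a single convex or concave region, so the per-segment argument above applies without ambiguity and the piecewise linear upper (resp.\ lower) bound is obtained by patching the segments together.

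The step I expect to be the only real wrinkle is the bookkeeping around the decomposition: I need to justify that including the inflection points in the grid is both necessary and sufficient for each subinterval to have a unique concavity label, and to note that the bounds patched across neighbouring intervals are continuous at the shared grid points (since they agree with $f$ at the secant endpoints and agree with $f$ at midpoint tangents only pointwise, so the global lower bound is a genuine piecewise linear function, possibly discontinuous, but still a pointwise bound). Beyond that, no delicate analysis is required; the result is a direct corollary of Jensen's inequality and the first-order characterization of convexity.
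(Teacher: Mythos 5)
Your proposal is correct and follows essentially the same route as the paper's proof: the secant (Jensen) inequality for the upper bound on convex segments, the first-order/tangent characterization for the lower bound, reversal via $-f$ on concave segments, and a decomposition of the domain at inflection points. Your extra remark that the patched lower bound need not be continuous at grid points matches the paper's own caveat that adjacent segments need not meet.
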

\begin{proof}
We proceed by establishing bounds for convex and concave regions separately, then show how these can be combined.

\textbf{Case 1: Convex regions.} Let $[x_i, x_{i+1}]$ be an interval where $f$ is convex, and let $x_m = \frac{x_i + x_{i+1}}{2}$.

For the upper bound, by definition of convexity, for any $\lambda \in [0,1]$ and $x = \lambda x_i + (1-\lambda)x_{i+1}$:
\begin{align*}
f(x) \leq \lambda f(x_i) + (1-\lambda)f(x_{i+1})
\end{align*}

This is precisely the line segment connecting $(x_i, f(x_i))$ and $(x_{i+1}, f(x_{i+1}))$, confirming it as an upper bound.

For the lower bound, let $L(x)$ be the line with slope $f'(x_m)$ passing through $(x_m, f(x_m))$:
\begin{align*}
L(x) = f(x_m) + f'(x_m)(x - x_m)
\end{align*}

For a convex function, any tangent line lies below the function, thus:
\begin{align*}
f(x) \geq f(x_m) + f'(x_m)(x - x_m) = L(x) \quad \forall x
\end{align*}

\textbf{Case 2: Concave regions.} For intervals where $f$ is concave, the inequalities reverse. The line segment becomes a lower bound, and the tangent line at $x_m$ becomes an upper bound, by the same reasoning applied to $-f$.

\textbf{Combining regions:} Any continuous function can be decomposed into intervals where it is either convex or concave, assuming $f''$ exists and changes sign a finite number of times in any bounded interval. This holds for all the densities of interest. By applying the appropriate bounds in each region, we obtain piecewise linear upper and lower bounds for the entire function.
\end{proof}

This construction assumes $f$ is twice differentiable almost everywhere to identify convex and concave regions. We also assume that $f'(x_m)$ exists at each midpoint. The approximation error depends on the grid spacing and the maximum curvature of $f$. Note that the endpoints from two adjacent linear segments do not have to meet in the same point.

\begin{lemma}\label{lemma:mixture_bounds}
Let $\smash{f(x) = \sum_{i=1}^k w_i f_i(x)}$ be a mixture of density functions with weights $w_i \geq 0$ and $\smash{\sum_{i=1}^k w_i = 1}$. Let $g_i(x)$ be a linear upper bound and $h_i(x)$ a lower bound of a component $f_i(x)$ according to Lemma~\ref{lemma:upper_lower_bounds}. Then, $\smash{g(x) = \sum_{i=1}^k w_i g_i(x)}$ is an upper bound of $f(x)$, and $\smash{h(x) = \sum_{i=1}^k w_i h_i(x)}$ is a lower bound of $f(x)$.
\end{lemma}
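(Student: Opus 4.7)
The plan is to prove both bounds directly from the componentwise bounds provided by Lemma~\ref{lemma:upper_lower_bounds}, exploiting only the nonnegativity of the mixture weights and the monotonicity of summation. This is essentially a one-line computation, so the main task is just to present the steps cleanly and to note why the resulting bounds remain piecewise linear (which is what makes them useful for Theorem~\ref{thm:approx_const}).

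First, I would fix an arbitrary point $x$ in the common domain. By the hypothesis, for each component $i \in \{1,\dots,k\}$ we have $h_i(x) \leq f_i(x) \leq g_i(x)$. Since $w_i \geq 0$, multiplying each inequality by $w_i$ preserves its direction, giving $w_i h_i(x) \leq w_i f_i(x) \leq w_i g_i(x)$. Summing over $i$ from $1$ to $k$ yields
\begin{align*}
h(x) = \sum_{i=1}^k w_i h_i(x) \;\leq\; \sum_{i=1}^k w_i f_i(x) = f(x) \;\leq\; \sum_{i=1}^k w_i g_i(x) = g(x),
\end{align*}
which is exactly both claims. The pointwise argument holds for all $x$, so the bounds are valid globally.

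Second, I would add a brief remark that since each $g_i$ and $h_i$ is piecewise linear on the grid $\{x_0,\dots,x_n\}$ (from Lemma~\ref{lemma:upper_lower_bounds}), any nonnegative linear combination is piecewise linear on the same grid. This is crucial because it means $g$ and $h$ inherit the structural property needed to apply Theorem~\ref{thm:approx_const}: evaluating $g(x_j)$ and $h(x_j)$ reduces to summing the weighted component values at the grid points, and monotonicity of linear segments between grid points means the maxima of the ratio $g/h$ are attained at the grid points.

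There is no real obstacle here; the only subtlety worth flagging is that the lemma depends on $w_i \geq 0$ (a signed mixture would not preserve the inequality directions), and that implicitly the components $f_i$ are required to share a common domain so the pointwise bounds make sense. Both conditions are standard in mixture modeling and hold automatically for the TPP density mixtures used throughout the paper, so the proof goes through as above.
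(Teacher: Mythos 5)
Your proof is correct and follows essentially the same route as the paper's: multiply the componentwise bounds by the nonnegative weights, sum over components, and note that the weighted sum of piecewise linear functions remains piecewise linear. The only cosmetic difference is that you chain both inequalities in a single display, whereas the paper treats the upper and lower bounds separately.
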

\begin{proof}
    %\begin{proof}
    Since $g_i(x) \geq f_i(x)$ for all $x$ and for each $i \in \{1,2,\ldots,k\}$, and $w_i \geq 0$, we have:
    \begin{align}
    w_i g_i(x) \geq w_i f_i(x) \quad \forall x, \forall i
    \end{align}
    
    Summing over all components:
    \begin{align}
    \sum_{i=1}^k w_i g_i(x) &\geq \sum_{i=1}^k w_i f_i(x)\\
    g(x) &\geq f(x)
    \end{align}
    
    Similarly, since $h_i(x) \leq f_i(x)$ for all $x$ and for each $i \in \{1,2,\ldots,k\}$, we have:
    \begin{align}
    \sum_{i=1}^k w_i h_i(x) &\leq \sum_{i=1}^k w_i f_i(x)\\
    h(x) &\leq f(x)
    \end{align}
    
    Since each $g_i(x)$ and $h_i(x)$ is piecewise linear by construction, and a weighted sum of piecewise linear functions remains piecewise linear, both $g(x)$ and $h(x)$ maintain the piecewise linear structure.
    %\end{proof}
\end{proof}

Lemma~\ref{lemma:mixture_bounds} holds even if component functions are approximated on different grids. In that case, one has to take the union over all the grid points to get the final shape. A simpler and computationally friendly approach is to predefine the grid that will share the convexity for all the components.
Finally, having proven Lemmas~\ref{lemma:bound_ratio}, \ref{lemma:upper_lower_bounds}, and \ref{lemma:mixture_bounds}, we can prove Theorem~\ref{thm:approx_const} from the main text.

\begin{proof}
(\textbf{Theorem~\ref{thm:approx_const}})
Between any two adjacent grid points $[x_i, x_{i+1}]$, both $g_T(x)$ and $h_P(x)$ are linear functions:
\begin{align}
g_T(x) &= a_i x + b_i\\
h_P(x) &= c_i x + d_i
\end{align}
where $a_i, b_i, c_i, d_i$ are constants determined by the respective bounds.

The ratio between grid points is therefore:
\begin{align}
r(x) = \frac{g_T(x)}{h_P(x)} = \frac{a_i x + b_i}{c_i x + d_i}
\end{align}

This ratio function $r(x)$ is either:
\begin{enumerate}
\item Constant (when $\frac{a_i}{c_i} = \frac{b_i}{d_i}$), in which case the maximum is attained everywhere, including at the endpoints,
\item Strictly monotonic (either increasing or decreasing), as the derivative $r'(x) = \frac{a_id_i - b_ic_i}{(c_ix + d_i)^2}$ has constant sign. In this case, the maximum in $[x_i, x_{i+1}]$ must occur at either $x_i$ or $x_{i+1}$.
\end{enumerate}

Since this holds for all intervals $[x_i, x_{i+1}]$, the global maximum of $r(x)$ must occur at one of the grid points $\{x_0, x_1, \ldots, x_n\}$. Therefore:
\begin{align}
\tilde{M} = \max_x r(x) = \max_{i \in \{0,1,\ldots,n\}} \frac{g_T(x_i)}{h_P(x_i)}
\end{align}
\end{proof}

Theorem~\ref{thm:approx_const} shows us how to construct the grid made out of piecewise linear segments. Since it works for mixture distributions, e.g., with components defined in \Secref{sec:common_distributions}, and since mixture distributions build a universal approximator for TPPs \citep{shchur2019intensity}, our method can be considered a universal approach for speculative sampling.

\subsection{Algorithm for finding the rejection constant}\label{app:algo}

Algorithm~\ref{alg:tangent_algo} shows us how to get the upper and lower bound for any density, as long as we can evaluate this density and its derivative in any point. Algorithm~\ref{alg:envelope_algo} shows us how to compute the rejection constant given the target and proposal density.

\begin{algorithm}[H]
\caption{$\mathrm{GetBounds}$ function for bounding density}
\label{alg:tangent_algo}

\begin{algorithmic}[1]
\STATE \textbf{Input:} Distribution $p$, left segment bounds $\mathcal{X}_\text{left}$, right segment bounds $\mathcal{X}_\text{right}$, boolean $\mathrm{upperBound}$
\STATE \textbf{Output:} Left and right segment values which define the linear segments which bound the density

\STATE $ \bar{\mathcal{X}} \gets (\mathcal{X}_\text{left} + \mathcal{X}_\text{right}) / 2 $  
\hfill \# Mid points

\STATE $ \bar{\mathcal{P}} \gets p(\bar{\mathcal{X}}), \,\, \mathcal{P}_\text{left} \gets p(\mathcal{X}_\text{left}), \,\, \mathcal{P}_\text{right} \gets p(\mathcal{X}_\text{right})$ 

\STATE $ \bar{\mathcal{P}'} \gets p'(\bar{\mathcal{X}}) $ \hfill \# Derivative in mid points

\STATE $ \mathcal{Z}_\text{left} \gets \bar{\mathcal{P}'} (\mathcal{X}_\text{left} - \bar{\mathcal{X}}) + \bar{\mathcal{P}} $
\hfill \# Tangent values in edges

\STATE $ \mathcal{Z}_\text{right} \gets \bar{\mathcal{P}'} (\mathcal{X}_\text{right} - \bar{\mathcal{X}}) + \bar{\mathcal{P}} $

\IF{ $\mathrm{upperBound}$ }
     \STATE $\mathcal{Y}_\text{left}, \mathcal{Y}_\text{right} \gets \max ( \mathcal{P}_\text{left},  \mathcal{Z}_\text{left}), \max ( \mathcal{P}_\text{right},  \mathcal{Z}_\text{right})$
\ELSE{}
     \STATE $\mathcal{Y}_\text{left}, \mathcal{Y}_\text{right} \gets \min ( \mathcal{P}_\text{left},  \mathcal{Z}_\text{left}),  \min ( \mathcal{P}_\text{right},  \mathcal{Z}_\text{right})$
\ENDIF

\STATE \textbf{Return} $ \mathcal{Y}_\text{left}, \mathcal{Y}_\text{right} $
\end{algorithmic}
\end{algorithm}

\begin{algorithm}[H]
\caption{$\mathrm{RejectionConst}$ for any pair of densities}
\label{alg:envelope_algo}
\begin{algorithmic}[1]
\STATE \textbf{Input:} Proposal distribution $ q $, target distribution $ p $, target percentile $ \alpha $, number of grid points $ n $, grid generator $\mathrm{GetGridPoints}$, $\mathrm{GetBounds}$ (Algorithm~\ref{alg:tangent_algo})
\STATE \textbf{Output:} Rejection constant $ M $

\STATE $ \mathcal{X}_\text{left}, \mathcal{X}_\text{right} \gets \mathrm{GetGridPoints}(q, p, \alpha, n) $

\STATE $ \mathcal{P}_\text{left}, \mathcal{P}_\text{right} \gets \mathrm{GetBounds}(p, \mathcal{X}_\text{left}, \mathcal{X}_\text{right}, \text{True}) $

\STATE $ \mathcal{Q}_\text{left}, \mathcal{Q}_\text{right} \gets \mathrm{GetBounds}(q, \mathcal{X}_\text{left}, \mathcal{X}_\text{right}, \text{False}) $

\STATE $ \mathcal{R} \gets [ \mathcal{P}_\text{left} / \mathcal{Q}_\text{left}, \mathcal{P}_\text{right} / \mathcal{Q}_\text{right} ] $

\STATE $M \gets \max(\mathcal{R})$

\STATE \textbf{Return} $ M $

\end{algorithmic}
\end{algorithm}

\subsection{Improved rejection sampling for categorical distribution with bounded error}\label{app:improved_rejection_cat}

For categorical distributions $p_T$ and $p_P$ over a discrete set $\mathcal{X}$, the rejection sampling constant is defined in \Eqref{eq:categorical_rejection_const}. The acceptance probability is then $\frac{1}{M}$, meaning that on average, $M$ samples must be drawn from $p_P$ to obtain one sample from $p_T$.
We can trade-off accuracy for efficiency by defining the $\delta$-truncated rejection constant as:
\begin{align}
    M_{\delta} = \min\left\{M \geq 1 \Big| \sum\nolimits_{x \in S(M)} p_T(x) \geq 1-\delta\right\} ,
\end{align}
where $S(M) = \{x \in \mathcal{X} \mid M \cdot p_P(x) \geq p_T(x)\}$ is the set of elements that satisfy the rejection criterion with constant $M$. To compute $M_{\delta}$:
\begin{enumerate}
    \item Calculate ratios $r(x) = \frac{p_T(x)}{p_P(x)}$ for all $x \in \mathcal{X}$,
    \item Sort ratios in descending order: $r_1 \geq r_2 \geq \ldots \geq r_n$,
    \item Compute cumulative mass of the target distribution: $C_i = \sum_{j=1}^{i} p_T(\hat{x}_j)$, where $\hat{x}_j$ is the element with the $j$th highest ratio,
    \item Find the smallest index $i^*$ such that $C_{i^*} \geq \delta$,
    \item Set $M_{\delta} = r_{i^*+1}$.
\end{enumerate}
This corresponds to excluding the elements with the highest ratios, whose collective probability under $p$ is at most $\delta$. When $\delta$ is small, we expect this to have no effect on sampling quality. To quantify the error, we measure the total variation distance: $\text{TV}(\tilde{p}, p) = \frac{1}{2}\sum_{x \in \mathcal{X}} |\tilde{p}(x) - p(x)|$, where $\tilde{p}$ is the distribution resulting from $\delta$-truncated rejection sampling.
We show that this error is bounded by $\delta$.
\begin{lemma}\label{lemma:cat_delta_truncation}
Let $\tilde{p}$ be the distribution of samples generated by $\delta$-truncated rejection sampling. The total variation distance between $\tilde{p}$ and the target distribution $p$ is bounded by $\text{TV}(\tilde{p}, p) \leq \delta$.
\end{lemma}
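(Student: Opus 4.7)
The plan is to characterize the distribution $\tilde{p}$ of accepted samples produced by the $\delta$-truncated rejection scheme, and then compute the total variation distance directly.

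First I would set up notation. Let $E = \{\hat{x}_1,\dots,\hat{x}_{i^*}\} \subseteq \mathcal{X}$ be the set of excluded elements (those whose ratios exceed $M_\delta$) and let $S = \mathcal{X}\setminus E$ be the retained elements. By the construction of $M_\delta$ in the enumerated procedure, $p(E) := \sum_{x\in E} p(x) \leq \delta$, and for every $x \in S$ we have $p(x) \leq M_\delta \, p_P(x)$, so standard rejection sampling is valid on $S$ with constant $M_\delta$.

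Next I would identify $\tilde{p}$. In the truncated scheme, a draw $x \sim p_P$ is discarded whenever $x \in E$ (otherwise the acceptance probability would exceed one and correctness would fail); for $x \in S$ the draw is accepted with probability $p(x)/(M_\delta p_P(x)) \leq 1$. The marginal probability that any given draw is accepted and equals $x \in S$ is $p_P(x)\cdot p(x)/(M_\delta p_P(x)) = p(x)/M_\delta$, while $x \in E$ is never accepted. Normalizing, the distribution of accepted samples is
\begin{align*}
\tilde{p}(x) = \begin{cases} p(x)/p(S), & x\in S,\\ 0, & x \in E, \end{cases}
\end{align*}
i.e.\ $p$ conditioned on $S$.

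The final step is a one-line computation of the total variation distance. Splitting the sum across $S$ and $E$,
\begin{align*}
2\,\text{TV}(\tilde{p},p) = \sum_{x\in S}\left|\frac{p(x)}{p(S)} - p(x)\right| + \sum_{x\in E} p(x) = \left(\frac{1}{p(S)}-1\right) p(S) + p(E) = 2\,p(E) \leq 2\delta,
\end{align*}
which yields $\text{TV}(\tilde{p},p)\leq \delta$ as claimed. The only real subtlety, and the place I would be most careful, is in step two: one must commit to the right implementation of ``$\delta$-truncated rejection sampling'' (namely, rejecting draws that land in $E$) so that $\tilde{p}$ is exactly the conditional $p(\cdot\mid S)$; any alternative (e.g.\ capping acceptance at one for $x\in E$) would yield a messier bound involving $p_P(E)$, and the tight $\delta$ guarantee follows cleanly only from the conditional interpretation.
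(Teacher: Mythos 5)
Your proposal is correct and follows essentially the same route as the paper: identify $\tilde{p}$ as the target distribution renormalized on the retained set $S$, then compute the total variation distance directly, with both halves of the sum contributing $p(E)\leq\delta$. The only difference is that you justify more explicitly why the accepted-sample distribution equals $p(\cdot\mid S)$ (via the marginal acceptance probability $p(x)/M_\delta$ on $S$ and zero on $E$), a step the paper simply asserts.
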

\begin{proof}
Let $E = \left\{x \in \mathcal{X} \mid \frac{p(x)}{q(x)} > M_{\delta}\right\}$ be the set of excluded elements. Then, by construction, $\sum_{x \in E} p(x) \leq \delta$. The $\delta$-truncated algorithm effectively samples from a re-normalized distribution:

$$\tilde{p}(x) = 
\begin{cases}
\frac{p(x)}{1-\sum_{y \in E} p(y)} & \text{if } x \notin E \\
0 & \text{if } x \in E .
\end{cases}$$

The total variation distance is bounded by:
\begin{align*}
\text{TV}(\tilde{p}, p) = \frac{1}{2}\sum_{x \in \mathcal{X}} |\tilde{p}(x) - p(x)| = \frac{1}{2}\left(\sum_{x \in E} p(x) + \sum_{x \notin E} \left|\frac{p(x)}{1-\sum_{y \in E} p(y)} - p(x)\right|\right) ,
\end{align*}

which simplifies to $\text{TV}(\tilde{p}, p) = \sum_{x \in E} p(x) \leq \delta$.
\end{proof}

A more precise estimation of the error accounts for the partial coverage of excluded categories:
\begin{align}\label{eq:categorical_better_distance}
\text{TV}_{\text{effective}}(\tilde{p}_T, p_T) = \sum_{x \in E} p_T(x) \left(1 - \min\left(1, \frac{M_{\delta} p_P(x)}{p_T(x)}\right)\right) .
\end{align}

This represents the fact that categories in $E$ aren't completely unrepresented, they are sampled from $p_P$ and accepted with probability $\frac{M_{\delta} p_P(x)}{p_T(x)}$, which provides partial coverage.

\section{Distributions}\label{app:distributions}

\begin{figure}[h]
    \centering
    \includegraphics[width=0.8\linewidth]{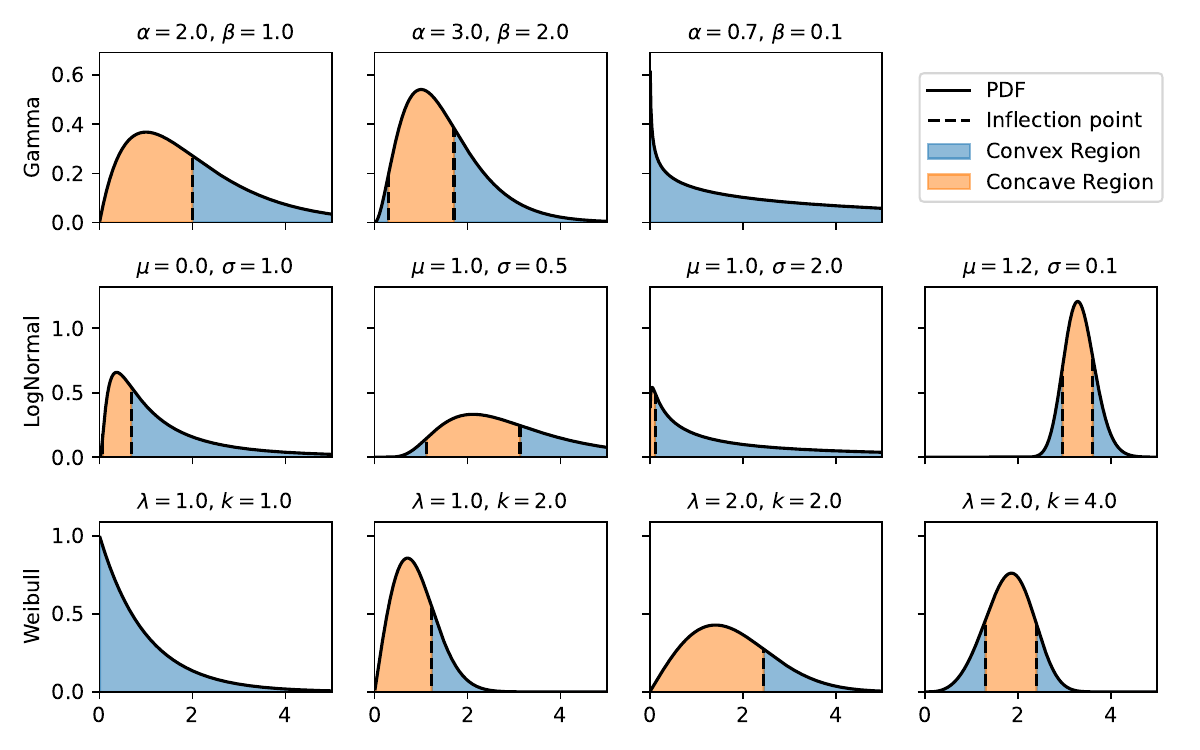}
    \caption{Convexity of Gamma, log-normal and Weibull distributions for different parameters. The first and second derivative, and inflection points are available in closed-form, see \Tabref{tab:distributions}.}
    \label{fig:enter-label}
\end{figure}

\subsection{Exponential}\label{app:exp_dist}

The exponential PDF is given by $f(x; \lambda) = \lambda e^{-\lambda x}$, $\lambda > 0$.
The first derivative of \(f(x)\) is:
\[
f'(x) = \frac{d}{dx} \left( \lambda e^{-\lambda x} \right) = -\lambda^2 e^{-\lambda x}.
\]
Using this, the second derivative of \(f(x)\) is:
\[
f''(x) = \frac{d}{dx} \left( -\lambda^2 e^{-\lambda x} \right) = \lambda^3 e^{-\lambda x}.
\]
Since $\lambda>0$ this is always positive, exponential PDF is convex, therefore, we do not have any inflection points.

\subsection{Gamma}\label{app:gamma_dist}

The Gamma PDF is given by $
f(x; \alpha, \beta) = \frac{\beta^\alpha}{\Gamma(\alpha)} x^{\alpha - 1} e^{-\beta x}; \alpha, \beta>0
$. The first derivative of \(f(x)\) is:
\begin{align*}
    f'(x) &= \frac{\beta^\alpha}{\Gamma(\alpha)} \frac{d}{dx} \left( x^{\alpha - 1} e^{-\beta x} \right) \\
        &= \frac{\beta^\alpha}{\Gamma(\alpha)} \left[ \frac{d}{d x} \left( x^{\alpha - 1} \right) e^{-\beta x} + x^{\alpha - 1} \frac{d}{d x} \left( e^{-\beta x} \right) \right] \\
        &= \frac{\beta^\alpha}{\Gamma(\alpha)} \left[ (\alpha - 1) x^{\alpha - 2} e^{-\beta x} - \beta x^{\alpha - 1} e^{-\beta x} \right] \\
        &= f(x) \left( \frac{\alpha - 1}{x} - \beta \right).
\end{align*}
Then, the second derivative of $f(x)$ is:
\begin{align*}
    f''(x) &= f'(x) \left( \frac{\alpha - 1}{x} - \beta \right) + f(x) \frac{d}{d x} \left( \frac{\alpha - 1}{x} - \beta \right) \\
    % &= f(x) \left( \frac{\alpha - 1}{x} - \beta \right)^2 - f(x) \frac{\alpha - 1}{x^2} \\
    &= f(x) \left[ \left( \frac{\alpha - 1}{x} - \beta \right)^2 - \frac{\alpha - 1}{x^2} \right]
\end{align*}
We find inflection points by solving $f''(x) = 0$. Since $f(x) > 0$, this simplifies to:
\begin{align*}
    \left( \frac{\alpha - 1}{x} - \beta \right)^2 &= \frac{\alpha - 1}{x^2} \\
    \left| \frac{\alpha - 1}{x} - \beta \right| &= \sqrt{\frac{\alpha - 1}{x^2}}
\end{align*}
We distinguish two cases:
\begin{enumerate}
    \item \(\frac{\alpha - 1}{x} - \beta = \sqrt{\frac{\alpha - 1}{x^2}}\)
    \item \(\frac{\alpha - 1}{x} - \beta = -\sqrt{\frac{\alpha - 1}{x^2}}\)
\end{enumerate}
which finally gives us our inflection points:
\begin{align*}
x_1 = \frac{\alpha - 1 - \sqrt{\alpha - 1}}{\beta}, \quad x_2 = \frac{\alpha - 1 + \sqrt{\alpha - 1}}{\beta} .
\end{align*}

\subsection{Log-normal}\label{app:lognorm_dist}

The log-normal PDF is given by:
\[
f(x; \mu, \sigma) = \frac{1}{x \sigma \sqrt{2\pi}} \exp\left(-\frac{(\log x - \mu)^2}{2\sigma^2}\right), \quad \sigma > 0
\]

The first derivative of \(f(x)\) is:
\begin{align*}
    f'(x) &= \frac{d}{d x} \left( \frac{1}{x \sigma \sqrt{2\pi}} \exp\left(-\frac{(\log x - \mu)^2}{2\sigma^2}\right) \right) \\
          &= \frac{d}{d x} \left( \frac{1}{x \sigma \sqrt{2\pi}} \right) \exp\left(-\frac{(\log x - \mu)^2}{2\sigma^2}\right) + \frac{1}{x \sigma \sqrt{2\pi}} \frac{d}{d x} \left( \exp\left(-\frac{(\log x - \mu)^2}{2\sigma^2}\right) \right) \\
          &= -\frac{1}{x^2 \sigma \sqrt{2\pi}} \exp\left(-\frac{(\log x - \mu)^2}{2\sigma^2}\right) + \frac{1}{x \sigma \sqrt{2\pi}} \exp\left(-\frac{(\log x - \mu)^2}{2\sigma^2}\right) \left( -\frac{1}{\sigma^2} \frac{\log x - \mu}{x} \right) \\
          &= f(x) \left( -\frac{1}{x} - \frac{\log x - \mu}{\sigma^2 x} \right).
\end{align*}
The second derivative of \(f(x)\) is:
\begin{align*}
    f''(x) &= \frac{d}{d x} \left( f(x) \left( -\frac{1}{x} - \frac{\log x - \mu}{\sigma^2 x} \right) \right) \\
           &= f'(x) \left( -\frac{1}{x} - \frac{\log x - \mu}{\sigma^2 x} \right) + f(x) \frac{d}{d x} \left( -\frac{1}{x} - \frac{\log x - \mu}{\sigma^2 x} \right) \\
           &= f(x) \left( -\frac{1}{x} - \frac{\log x - \mu}{\sigma^2 x} \right)^2 + f(x) \left[ \frac{1}{x^2} - \frac{1}{\sigma^2} \frac{d}{d x} \left( \frac{\log x - \mu}{x} \right) \right] \\
           &= f(x) \left( -\frac{1}{x} - \frac{\log x - \mu}{\sigma^2 x} \right)^2 + f(x) \left[ \frac{1}{x^2} \left( 1 - \frac{1}{\sigma^2} \right) + \frac{\log x - \mu}{\sigma^2 x^2} \right] \\
           &= f(x) \frac{1}{x^2} \left[ \left( 1 + \frac{\log x - \mu}{\sigma^2} \right)^2 + \left( 1 - \frac{1}{\sigma^2} \right) + \frac{\log x - \mu}{\sigma^2} \right] .
\end{align*}
We find inflection points by solving \( f''(x) = 0 \). Since \( f(x) > 0 \), this simplifies to:
\[
\left( 1 + \frac{\log x - \mu}{\sigma^2} \right)^2 + \left( 1 - \frac{1}{\sigma^2} \right) + \frac{\log x - \mu}{\sigma^2} = 0
\]
Let \( z = \log x \). Substituting \( z \), we get:
\begin{align*}
\left( 1 + \frac{z - \mu}{\sigma^2} \right)^2 + \left( 1 - \frac{1}{\sigma^2} \right) + \frac{z - \mu}{\sigma^2} &= 0 \\
1 + 2\frac{z - \mu}{\sigma^2} + \frac{(z - \mu)^2}{\sigma^4} + \left( 1 - \frac{1}{\sigma^2} \right) + \frac{z - \mu}{\sigma^2} &= 0 \\
\frac{(z - \mu)^2}{\sigma^4} + \frac{3(z - \mu)}{\sigma^2} + \left( 2 - \frac{1}{\sigma^2} \right) &= 0
\end{align*}
This is a quadratic equation in \( z - \mu \). Let \( a = \frac{1}{\sigma^4} \), \( b = \frac{3}{\sigma^2} \), and \( c = 2 - \frac{1}{\sigma^2} \). The equation becomes $a(z - \mu)^2 + b(z - \mu) + c = 0$, which can be solved using the quadratic formula, which gives us:
\begin{align*}
x = \exp\left(\mu + \frac{-b \pm \sqrt{b^2 - 4ac}}{2a}\right), \quad a=\frac{1}{\sigma^4},\,\, b = \frac{3}{\sigma^2},\,\, c = 2 - \frac{1}{\sigma^2}.
\end{align*}
Plugging values back in, we get: $\exp(\mu + \frac{\sigma^2}{2} \left(-3 \pm \sqrt{1 + \frac{4}{\sigma^2}}\right))$.

\subsection{Weibull}\label{app:weibull_dist}

The Weibull PDF is given by $
f(x; k, \lambda) = \frac{k}{\lambda} \left( \frac{x}{\lambda} \right)^{k - 1} e^{-\left( \frac{x}{\lambda} \right)^k};\,\, k, \lambda > 0$, where $k$ is called shape and $\lambda$ is scale parameter.
The first derivative of $f(x)$ is:
\begin{align*}
    f'(x) &= \frac{d}{dx} \left[ \frac{k}{\lambda} \left( \frac{x}{\lambda} \right)^{k - 1} e^{-\left( \frac{x}{\lambda} \right)^k} \right] \\
          &= \frac{k}{\lambda} \left[ \frac{d}{dx} \left( \frac{x}{\lambda} \right)^{k - 1} e^{-\left( \frac{x}{\lambda} \right)^k} + \left( \frac{x}{\lambda} \right)^{k - 1} \frac{d}{dx} \left( e^{-\left( \frac{x}{\lambda} \right)^k} \right) \right] \\
          &= \frac{k}{\lambda} \left[ (k - 1) \left( \frac{x}{\lambda} \right)^{k - 2} \frac{1}{\lambda} e^{-\left( \frac{x}{\lambda} \right)^k} - k \left( \frac{x}{\lambda} \right)^{k - 1} \frac{1}{\lambda} e^{-\left( \frac{x}{\lambda} \right)^k} \right] \\
          &= f(x) \left[ \frac{k - 1}{x} - \frac{k}{\lambda} \left( \frac{x}{\lambda} \right)^{k - 1} \right].
\end{align*}

The second derivative of \(f(x)\) is:
\begin{align*}
    f''(x) &= \frac{d}{dx} \left[ f(x) \left( \frac{k - 1}{x} - \frac{k}{\lambda} \left( \frac{x}{\lambda} \right)^{k - 1} \right) \right] \\
           &= f'(x) \left( \frac{k - 1}{x} - \frac{k}{\lambda} \left( \frac{x}{\lambda} \right)^{k - 1} \right) + f(x) \frac{d}{dx} \left( \frac{k - 1}{x} - \frac{k}{\lambda} \left( \frac{x}{\lambda} \right)^{k - 1} \right) \\
          &= f'(x) \left( \frac{k - 1}{x} - \frac{k}{\lambda} \left( \frac{x}{\lambda} \right)^{k - 1} \right) + f(x) \left[ -\frac{k - 1}{x^2} - \frac{k (k - 1)}{\lambda^2} \left( \frac{x}{\lambda} \right)^{k - 2} \right] \\
          &= f'(x) \left( \frac{k - 1}{x} - \frac{k}{\lambda} \left( \frac{x}{\lambda} \right)^{k - 1} \right) + f(x) \left[ -\frac{k - 1}{x^2} - \frac{k (k - 1)}{\lambda^2} \left( \frac{x}{\lambda} \right)^{k - 2} \right].
\end{align*}
To find the inflection points, we solve \( f''(x) = 0 \). Since \( f(x) > 0 \), we simplify to:
\begin{align*}
\left( \frac{k - 1}{x} - \frac{k}{\lambda} \left( \frac{x}{\lambda} \right)^{k - 1} \right)^2 &= -\frac{k - 1}{x^2} - \frac{k (k - 1)}{\lambda^2} \left( \frac{x}{\lambda} \right)^{k - 2}  \\
\left| \frac{k - 1}{x} - \frac{k}{\lambda} \left( \frac{x}{\lambda} \right)^{k - 1} \right| &= \sqrt{\frac{k - 1}{x^2} + \frac{k (k - 1)}{\lambda^2} \left( \frac{x}{\lambda} \right)^{k - 2}}.
\end{align*}

This gives two cases to solve:
\begin{align*}
    \frac{k - 1}{x} - \frac{k}{\lambda} \left( \frac{x}{\lambda} \right)^{k - 1} = \pm \sqrt{\frac{k - 1}{x^2} + \frac{k (k - 1)}{\lambda^2} \left( \frac{x}{\lambda} \right)^{k - 2}} .
\end{align*}

Let us substitute $y = \frac{x}{\lambda}$, then we have:
\begin{align*}
    \frac{k - 1}{\lambda y} - \frac{k}{\lambda} y^{k - 1} 
    &= \pm \sqrt{\frac{k - 1}{\lambda^2 y^2} + \frac{k (k - 1)}{\lambda^2} y^{k - 2}} \\
    \frac{k - 1}{y} - k y^{k - 1} 
    &= \pm \sqrt{\frac{k - 1}{y^2} + k (k - 1) y^{k - 2}} \\
    \left(\frac{k - 1}{y} - k y^{k - 1}\right)^2 
    &= \frac{k - 1}{y^2} + k (k - 1) y^{k - 2} \\
    \frac{(k - 1)^2}{y^2} - 2k(k - 1)y^{k-2} + k^2 y^{2k - 2} 
    &= \frac{k - 1}{y^2} + k (k - 1) y^{k - 2} 
\end{align*}
We rearrange to get:
\begin{align*}
    \frac{(k - 1)^2 - (k - 1)}{y^2} - 2k(k - 1)y^{k-2} - k(k-1)y^{k-2} + k^2 y^{2k - 2} &= 0 \\
    (k - 1)(k - 2) - 3k(k - 1)y^{k} + k^2 y^{2k} &= 0
\end{align*}
This is a quadratic equation in $z = y^k$, which gives:
\begin{align*}
    k^2z^2 - 3k(k - 1)z + (k - 1)(k - 2) = 0,
\end{align*}
then, using the quadratic formula:
\begin{align*}
    z = \frac{3k(k - 1) \pm \sqrt{9k^2(k-1)^2 - 4k^2(k-1)(k-2)}}{2k^2} .
\end{align*}
We can simplify further:
\begin{align*}
    9k^2(k-1)^2 - 4k^2(k-1)(k-2) = k^2(k-1)(9(k-1) - 4(k-2)) = k^2(k-1)(5k-1) ,
\end{align*}
when we substitute back into the quadratic solution we get:
\begin{align*}
    z = \frac{3k(k - 1) \pm k\sqrt{(k-1)(5k-1)}}{2k^2} = \frac{3(k-1) \pm \sqrt{(k-1)(5k-1)}}{2k} .
\end{align*}
In the original variables (from $z = y^k$ and $y = \frac{x}{\lambda}$) this gives us:
\begin{align*}
    \left(\frac{x}{\lambda}\right)^k &= \frac{3(k-1) \pm \sqrt{(k-1)(5k-1)}}{2k} \\
    \frac{x}{\lambda} &= \left(\frac{3(k-1) \pm \sqrt{(k-1)(5k-1)}}{2k}\right)^{1/k} \\
    x &= \lambda\left(\frac{1}{2}\right)^{1/k}\left(\frac{3(k-1) \pm \sqrt{(k-1)(5k-1)}}{k}\right)^{1/k} .
\end{align*}

\section{Data}\label{app:data}

\subsection{Benchmark data}\label{app:real_data}

\Tabref{tab:data_stats} shows the summary statistics for real-world data used in experiments. \Figref{fig:data_log_delta} shows the distribution of inter-event times, and \Figref{fig:data_samples} shows sample sequences.

\begin{table}[H]
    \caption{Facts about the real-world data used in experiments.}
    \label{tab:data_stats}
    \centering
    \footnotesize
    \begin{tabular}{lcccccccc}
     & Mark & Majority & \multicolumn{3}{c}{Number of sequences} & \multicolumn{3}{c}{Sequence lengths: ``min--max (median)"} \\
    Dataset & dim. & mark & Train & Val & Test & Train & Val & Test \\
    \midrule
    Amazon & 16 & 29.1\% & 6454 & 922 & 1851 & 14 - 94 (42) & 15 - 94 (42) & 14 - 94 (42) \\
    EQ & 7 & 43.7\% & 3000 & 400 & 900 & 15 - 18 (16) & 15 - 18 (17) & 0 - 18 (16) \\
    Reddit & 985 & 10.8\% & 6000 & 2000 & 2000 & 28 - 100 (44) & 29 - 100 (45) & 29 - 100 (45) \\
    Retweet & 3 & 49.4\% & 20000 & 2000 & 2000 & 50 - 264 (90) & 50 - 264 (89) & 50 - 264 (90) \\
    SO & 22 & 44.1\% & 1401 & 401 & 401 & 41 - 101 (57) & 41 - 101 (58) & 41 - 101 (61) \\
    Taobao & 17 & 44.3\% & 1300 & 200 & 500 & 28 - 64 (61) & 31 - 64 (61) & 32 - 64 (61) \\
    Taxi & 10 & 44.6\% & 1400 & 200 & 400 & 36 - 38 (38) & 36 - 38 (38) & 36 - 38 (38) \\
    \end{tabular}
\end{table}

\begin{figure}[H]
    \centering
    \includegraphics[width=0.8\linewidth]{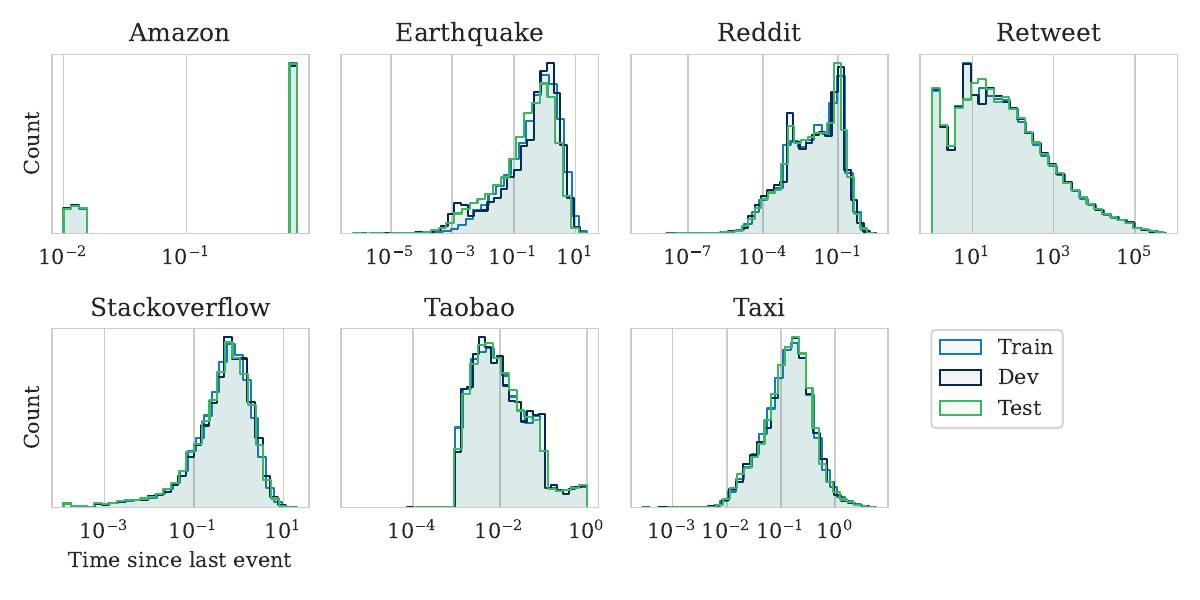}
    \caption{Histogram of times between consecutive events. Different colors are different data splits.}
    \label{fig:data_log_delta}
\end{figure}

\begin{figure}[t]
    \centering
    \includegraphics[width=0.7\linewidth]{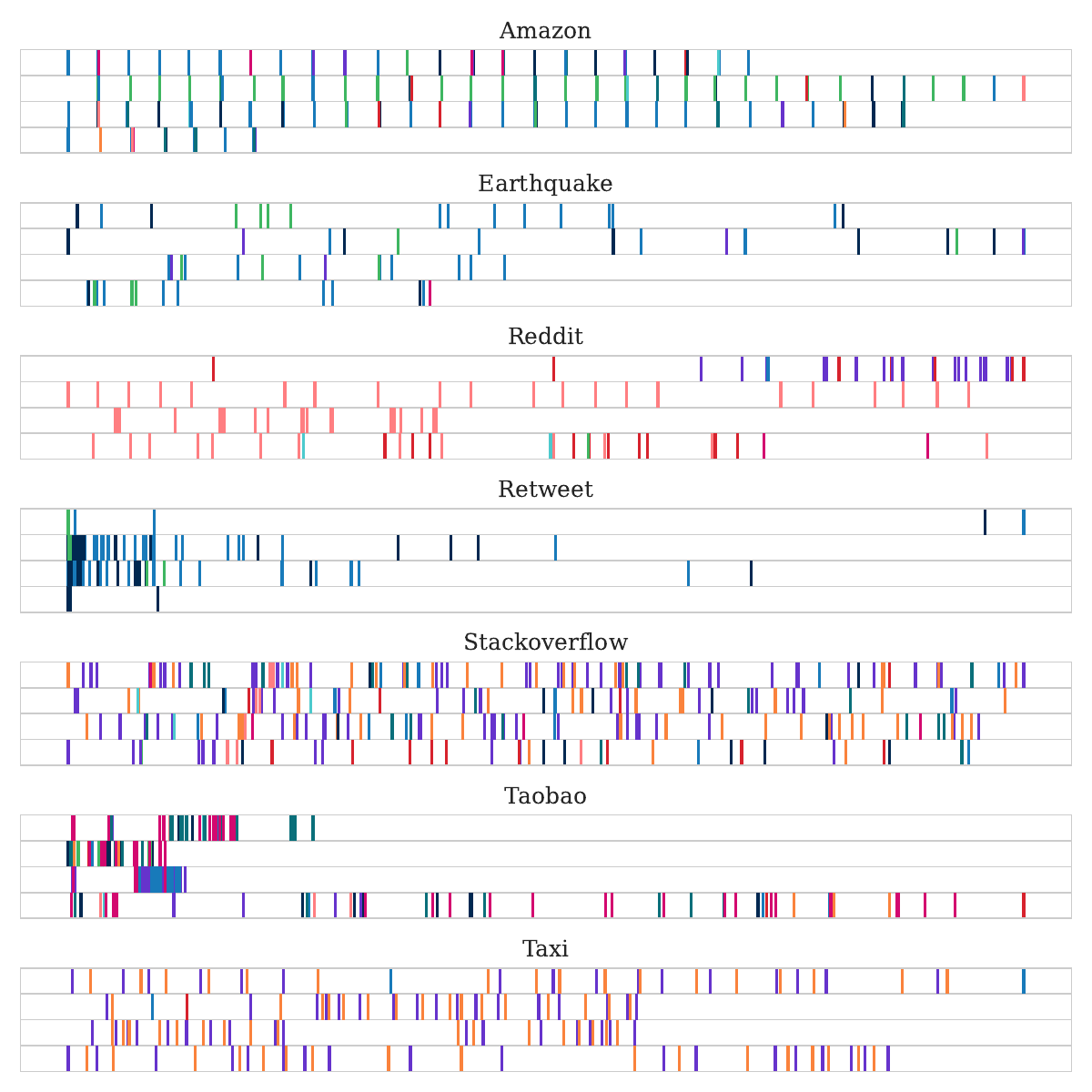}
    \caption{Four sequence examples for each benchmark dataset. Vertical lines correspond to the events, the space between the lines represents the time difference between events, different colors indicate different mark types.}
    \label{fig:data_samples}
\end{figure}

\subsection{Limit order book data}\label{app:lobster}

\Tabref{tab:lob_data_fields} describes the data fields in limit order book message data.
We only keep submission (1) and deletion (3) types since they account for 94\% of messages. Combining these two types with two directions gives us 4 different marks. Other possibilities, such as including size or price are possible but we leave this for future work.

\begin{table}[h]
    \centering
    \begin{tabular}{lp{10cm}}
    Field & Description \\
    \midrule
    Time & Seconds after midnight with decimal precision of at least milliseconds \\
    \addlinespace
    Type & Categorical variable with the following possible values: \\
    & 1. Submission of a new limit order \\
    & 2. Cancellation (partial deletion of a limit order) \\
    & 3. Deletion (total deletion of a limit order) \\
    & 4. Execution of a visible limit order \\
    & 5. Execution of a hidden limit order \\
    & 6. Trading halt indicator \\
    \addlinespace
    Order ID & Unique order reference number (assigned in order flow) \\
    Size & Number of shares \\
    Price & Dollar price \\
    Direction & -1 for sell limit order, and 1 for buy limit order \\
    \end{tabular}
    \caption{Description of LOB data fields.}
    \label{tab:lob_data_fields}
\end{table}
\section{Experiments}\label{app:experiments}

\subsection{Hardware}\label{app:hardware}

All experiments were conducted on a server equipped with a 40 core CPU at 2.40GHz and 754GB of RAM. The system features dual NVIDIA Tesla V100 GPUs, each with 16GB of dedicated VRAM.

\subsection{Jump process synthetic data results}\label{app:jump_results}

\begin{figure}[h]
    \centering
    \includegraphics[width=\linewidth]{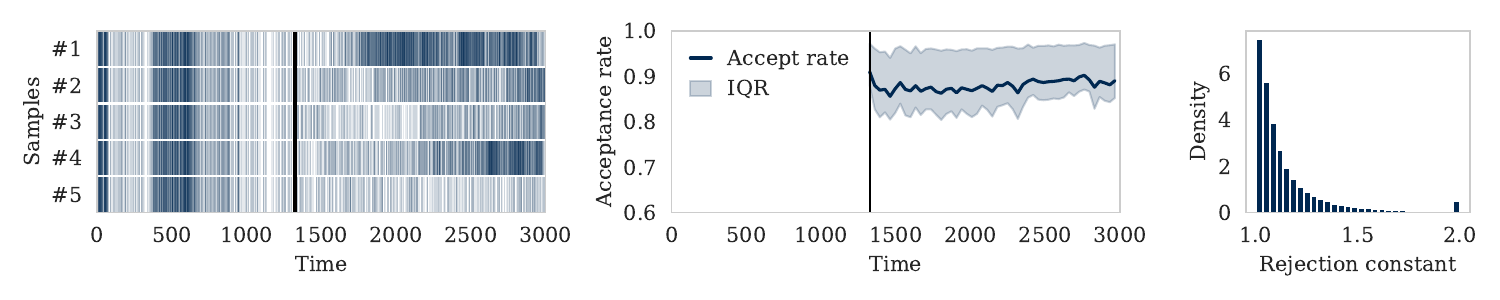}
    \caption{Qualitative results on jump process data. (Left) Five samples generated from the same initial sequence. Vertical blue lines indicate events, which results in visually darker areas for regimes with higher intensity. (Middle) Acceptance ratio over time. (Right) Rejection constant distribution.}
    \label{fig:example_server}
\end{figure}

\subsection{Stationarity of real-world data}\label{app:non_stationary}

We use an intensity-free model with a log-normal mixture distribution \citep{shchur2019intensity}. The full history model employs a GRU that updates its state with each event, while the Markov model processes events without retaining history. The no-history (renewal) model learns a single shared distribution for all events. Our model variations are implemented within the framework of \citet{xue2024easytpp}. In \Tabref{tab:stationary_results} we do not report Retweet RMSE results because of numerical instability which occurs computing this metric for all models.

\begin{table}[h]
    \small
    \centering
    \begin{tabular}{llccc}
     & & \multicolumn{3}{c}{TPP model type based on history}   \\
    Metric & Data & No history & Markov & Full history   \\
    \midrule
    Log-likelihood&Amazon & 0.205±0.035 & 0.231±0.040 & \textbf{0.513±0.026} \\
    &Earthquake & -2.307±0.002 & -2.306±0.001 & \textbf{-2.017±0.012} \\
    &Retweet & -9.892±0.000 & -9.892±0.000 & \textbf{-9.849±0.009} \\
    &Stackoverflow & -2.689±0.001 & -2.679±0.002 & \textbf{-2.229±0.001} \\
    &Taobao & 0.429±0.010 & 0.406±0.000 & \textbf{0.790±0.017} \\
    &Taxi & -0.608±0.005 & -0.597±0.001 & \textbf{0.384±0.002} \\
    \midrule
    Accuracy&Amazon & 0.290±0 (0.29) & 0.290±0 & \textbf{0.347±0.001}\\
    &Earthquake & 0.451±0 (0.44) & 0.451±0 & \textbf{0.470±0.001}\\
    &Retweet & 0.496±0 (0.49) & 0.496±0 & \textbf{0.601±0.004}\\
    &Stackoverflow & 0.437±0 (0.44) & 0.437±0 & \textbf{0.473±0.000}\\
    &Taobao & 0.422±0 (0.44) & 0.422±0 & \textbf{0.569±0.004}\\
    &Taxi & 0.439±0 (0.45) & 0.439±0 & \textbf{0.907±0.001}\\
    \midrule
    RMSE&Amazon & 3.867±5.035 & 0.486±0.001 & \textbf{0.462±0.000} \\
    &Earthquake & \textbf{2.404±0.033} & 3.221±1.561 & 2.729±0.563 \\
    &Stackoverflow & 1.549±0.041 & \textbf{1.529±0.015} & 2.233±0.654 \\
    &Taobao & 3.641±5.295 & \textbf{0.166±0.005} & 6.859±2.314 \\
    &Taxi & 13.573±12.906 & \textbf{0.437±0.015} & 0.449±0.041 \\
\end{tabular}
    \caption{Model test log-likelihood, mark accuracy and time RMSE. Brackets in ``No history'' model's accuracy column indicate the majority class. Full history model is the best overall, indicating the processes are non-stationary.}
    \label{tab:stationary_results}
\end{table}

\subsection{Monte Carlo rejection constant}\label{app:monte_carlo_rejection_const}

We compute the rejection constant necessary for exact baseline speculative sampling through a comprehensive numerical approach. Given a target distribution $f_T(t)$ and a proposal distribution $f_P(t)$, the method constructs a dense grid of evaluation points spanning the support of both distributions. At each point in this grid, we compute the ratio $f_T(t)/f_P(t)$. The rejection constant $M$ is then determined as the maximum value of this ratio across all evaluation points, effectively finding $\smash{M = \max_{t} \frac{f_T(t)}{f_P(t)}}$. With enough points, this approach guarantees that $M f_P(t) \geq f_T(t)$ for all $t$ in the domain, ensuring the correctness of the rejection sampling procedure. While computationally intensive compared to our proposed method, the MC brute force method provides a reliable measure of the \textit{optimal} rejection constant which can then be used to examine the gap in our approximation.

\Tabref{tab:mc_rejection} shows the metrics of distance between the true autoregressive samples and speculative samples generated using the Monte Carlo rejection constant. For autoregressive column values, the distance is computed on two disjoint sets of samples from the same sampling procedure. This demonstrates the empirical distance between the values from the same distribution as a lower bound. It additionally shows the average achieved speculative step. We distinguish between exact and inexact approach where the latter has a looser definition of the bounds on which the grid is constructed and the categorical distribution rejection constant is computed with $\delta = 0.05$.

\begin{table}[t]
    \centering
    \small
    \begin{tabular}{llccccccc}
 &  & \multicolumn{2}{c}{MMD} & \multicolumn{2}{c}{KL} & \multicolumn{2}{c}{LLR} &  \\
Exact & Data & Autoreg & MC & Autoreg & MC & Autoreg & MC & Step \\
\midrule
True & Amazon & 0.2 & 0.2 & 7.21 & 7.17 & -0.01 & -0.0 & 1.2608 \\
 & EQ & 0.19 & 0.19 & 3.99 & 3.93 & -0.03 & -0.09 & 2.0946 \\
 & Reddit & 0.19 & 0.2 & 6.11 & 6.28 & 0.03 & -0.08 & 1.6671 \\
 & Retweet & 0.2 & 0.19 & 1.89 & 1.82 & -0.02 & -0.01 & 2.0967 \\
 & SO & 0.19 & 0.19 & 6.91 & 6.98 & 0.02 & -0.0 & 1.5393 \\
 & Taobao & 0.2 & 0.2 & 8.43 & 8.51 & -0.01 & -0.13 & 1.4982 \\
 & Taxi & 0.19 & 0.19 & 2.78 & 2.78 & -0.0 & -0.0 & 1.0001 \\
\midrule
False & Amazon & 0.2 & 0.2 & 7.2 & 7.23 & 0.01 & 0.01 & 2.5115 \\
 & EQ & 0.19 & 0.2 & 3.84 & 3.98 & 0.0 & -0.1 & 2.9172 \\
 & Reddit & 0.19 & 0.2 & 6.43 & 6.55 & 0.02 & -0.15 & 2.1583 \\
 & Retweet & 0.19 & 0.2 & 1.89 & 1.88 & 0.0 & -0.03 & 2.4527 \\
 & SO & 0.18 & 0.18 & 6.93 & 7.02 & 0.04 & -0.01 & 2.1158 \\
 & Taobao & 0.2 & 0.2 & 8.38 & 8.63 & -0.0 & -0.16 & 1.7809 \\
 & Taxi & 0.19 & 0.19 & 2.68 & 2.71 & 0.02 & 0.02 & 1.0004 \\
\end{tabular}
    \caption{Speculative sampling using Monte Carlo (MC) approximation of the rejection constant, compared to the traditional one-by-one sampling (Autoreg). Speculative sampling is either exact (with tight bounds) or inexact (with loose bounds on categorical rejection constant and percentile computation). Inexact setting does not apply to autoregressive baseline. The reason for different values in ``Autoreg'' column for the same data is due to different random seeds.}
    \label{tab:mc_rejection}
\end{table}

\subsection{Metrics}\label{app:metrics}

To quantify the statistical similarity between sequences generated by our speculative sampling method and conventional autoregressive sampling, we employ multiple complementary metrics.

\textbf{KL divergence per event.} We measure the Kullback-Leibler divergence between the empirical mark distributions of generated samples on an event-by-event basis. For discrete mark sequences $\mathbf{x}_p$ and $\mathbf{x}_q$ from distributions $P$ and $Q$ respectively, we compute:
\begin{equation}
\mathrm{KL}_{\text{per event}} = \frac{1}{BL} \sum_{b=1}^B \sum_{l=1}^L \sum_{d=1}^D \hat{p}_{b,l}^{(d)} \log\frac{\hat{p}_{b,l}^{(d)}}{\hat{q}_{b,l}^{(d)}}
\end{equation}
where $B$ is the batch size, $L$ is the sequence length, $D$ is the mark space dimension, and $\hat{p}_{b,l}^{(d)}$, $\smash{\hat{q}_{b,l}^{(d)}}$ represent the empirical probability mass at dimension $d$ for event $l$ in batch $b$, computed using frequency counts across samples. A value closer to zero indicates greater similarity between distributions.

\textbf{Maximum Mean Discrepancy (MMD).} To compare the temporal aspects of generated sequences, we employ MMD with a Gaussian kernel to measure the distance between distributions of inter-arrival times. For each event position, we compute:
\begin{equation}
\mathrm{MMD}_{b,l}(\mathbf{t}_p, \mathbf{t}_q) = \mathbb{E}[k(t_{p,b,l}, t_{p,b,l}')] + \mathbb{E}[k(t_{q,b,l}, t_{q,b,l}')] - 2\mathbb{E}[k(t_{p,b,l}, t_{q,b,l})]
\end{equation}
where $k(\cdot,\cdot)$ is a Gaussian kernel with bandwidth selected via median L1 distance heuristic. The final MMD is averaged across all batch elements and event positions. MMD approaches zero as distributions become identical.

\textbf{Log-Likelihood Ratio.} This metric directly evaluates distributional agreement by comparing the log-probabilities assigned by the model to samples from different methods:
\begin{equation}
\mathrm{LLR} = \frac{1}{BSL} \sum_{b=1}^B \sum_{s=1}^S \sum_{l=1}^L \left( \log p_{\text{new}}(t_{s,l}, x_{s,l}) - \log p_{\text{old}}(t_{s,l}, x_{s,l}) \right)
\end{equation}
where $S$ is the number of samples per sequence, and the log-probabilities are computed using the same trained model. Values near zero indicate agreement between the sampling methods' output distributions.

For all metrics, we compute baseline comparisons by splitting samples from the conventional method into two halves and measuring the same statistics between them, providing a reference for expected variation within a single sampling method.

\subsection{Additional results}\label{app:additiona_results}

\looseness=-1
\Tabref{tab:gru_all_results} shows all the results for GRU encoder with hidden dimension of 256 and a log-normal mixture with 32 components. We test out different top-k values while measuring the divergence from the true samples, indicated by ``Baseline''. The speculative step is adjusted for different k, for 2 it becomes 10 and for 3 it is 15. As we can see, for most of the datasets larger top-k does not change the sample quality. The only dataset for which this is not the case is Taxi, which is a known issue discussed in the main text.
Time constant and mark constant indicate the average rejection constant for the respective time and mark distributions. 

\Tabref{tab:trans_all_results} similarly shows results for a two layer transformer network, with a similar setup as for GRU. We sample 10 samples, each with 20 sequences and use a fixed speculative step of 5 for all k. \Tabref{tab:cnn_all_results} shows timing results for convolutional neural network encoder. We also show results for GRU encoder with different decoders, exponential in \Tabref{tab:exp_all_results} and Weibull distribution in \Tabref{tab:weibull_all_results}.

Measuring wall-clock time for rejection sampling reveals that depending on the speculative size and the way we construct a grid, the total time spent on computing the constant can be up to 100ms. This is the worst case for non-optimized code, and we expect that the speed can be improved significantly. The algorithm has linear complexity in the number of grid points, or log-linear if we need to sort them. The reason encoder is fast because (1) we use small models and (2) it is implemented using highly optimized native functions. We expect that using larger models on longer sequences combined with a faster optimization of rejection step, instead of the current didactic approach to code, will show that even for small acceptance rate we match or outperform the conventional approach.

\begin{table}[]
    \centering
    \tiny
    \begin{tabular}{lcccccccc}
 & Data & Amazon & EQ & Reddit & Retweet & SO & Taobao & Taxi \\
 & Top-k &  &  &  &  &  &  &  \\
\midrule
LLR & 1 & -0.02±2.07 & -0.13±2.11 & -0.15±3.57 & 0.02±1.42 & -0.01±2.47 & -0.07±2.74 & 0.02±1.41 \\
 & 2 & -0.03±2.06 & -0.05±2.09 & -0.16±3.55 & 0.01±1.4 & -0.06±2.53 & -0.21±2.73 & 2.68±2.63 \\
 & 3 & -0.05±2.07 & 0.03±2.06 & -0.21±3.48 & -0.03±1.4 & -0.07±2.56 & -0.24±2.71 & 2.81±2.59 \\
Baseline &  & 0.02±2.06 & 0.04±2.13 & -0.06±3.56 & 0.02±1.46 & 0.01±2.43 & 
0.02±2.73 & 0.02±1.41 \\
\midrule
MMD & 1 & 0.2±0.2 & 0.2±0.17 & 0.19±0.16 & 0.19±0.14 & 0.18±0.16 & 0.2±0.15 & 0.19±0.16 \\
 & 2 & 0.19±0.19 & 0.19±0.16 & 0.19±0.16 & 0.19±0.14 & 0.19±0.16 & 0.2±0.15 & 0.21±0.17 \\
 & 3 & 0.2±0.2 & 0.18±0.16 & 0.2±0.17 & 0.19±0.14 & 0.18±0.16 & 0.2±0.15 & 0.21±0.17 \\
Baseline &  & 0.2±0.19 & 0.19±0.16 & 0.19±0.17 & 0.2±0.14 & 0.19±0.16 & 0.2±0.14 & 0.19±0.15 \\
\midrule
KL per event & 1 & 7.33±4.17 & 3.98±3.85 & 6.4±6.34 & 1.92±3.0 & 6.97±4.59 & 8.59±4.53 & 2.71±3.0 \\
 & 2 & 7.26±4.18 & 3.99±3.81 & 6.45±6.36 & 1.87±2.93 & 6.93±4.56 & 8.34±4.58 & 6.38±6.08 \\
 & 3 & 7.28±4.21 & 3.86±3.79 & 6.26±6.32 & 1.99±3.03 & 7.0±4.6 & 8.63±4.63 & 6.27±6.41 \\
Baseline &  & 7.26±4.16 & 3.9±3.81 & 6.36±6.26 & 1.89±2.95 & 6.93±4.55 & 8.33±4.43 & 2.64±2.94 \\
\midrule
Rank correlation & 1 & 0.0±0.05 & 0.01±0.11 & 0.0±0.06 & 0.0±0.05 & 0.0±0.05 & -0.0±0.05 & -0.0±0.06 \\
 & 2 & 0.0±0.05 & 0.0±0.1 & 0.01±0.06 & 0.0±0.06 & -0.02±0.06 & -0.0±0.05 & 0.01±0.05 \\
 & 3 & -0.0±0.04 & -0.02±0.11 & 0.02±0.06 & 0.01±0.06 & -0.0±0.05 & 0.0±0.05 & 0.0±0.04 \\
Baseline &  & 0.0±0.05 & -0.01±0.11 & -0.0±0.06 & -0.0±0.06 & -0.01±0.05 & -0.0±0.05 & 0.0±0.05 \\
\midrule
Time constant & 1 & 1.0656 & 1.5293 & 1.8412 & 1.0 & 1.3579 & 1.4391 & 3.5415 \\
 & 2 & 1.0711 & 1.6574 & 1.8747 & 1.0 & 1.3891 & 1.4742 & 2.6866 \\
 & 3 & 1.0785 & 1.8223 & 1.8644 & 1.0 & 1.4136 & 1.4952 & 2.5601 \\
\midrule
Mark constant & 1 & 1.4906 & 1.2353 & 2.3063 & 1.2389 & 10.7231 & 4.2384 & 4228.5493 \\
 & 2 & 1.5876 & 1.2418 & 2.4749 & 1.2579 & 19.9326 & 4.312 & 509.4294 \\
 & 3 & 1.6273 & 1.2454 & 2.6028 & 1.2751 & 48.905 & 4.3765 & 498.4741 \\
\midrule
Encoder runtime & 1 & 16.66±1.16 & 20.61±6.32 & 25.05±7.17 & 13.81±1.14 & 29.85±5.45 & 37.56±2.88 & 54.33±1.12 \\
 & 2 & 11.4±0.98 & 10.99±2.41 & 19.2±5.24 & 7.73±0.75 & 17.36±3.03 & 22.12±1.66 & 29.86±1.61 \\
 & 3 & 7.47±0.73 & 7.82±1.24 & 13.81±3.72 & 5.94±0.55 & 13.13±2.18 & 16.22±1.39 & 22.62±1.57 \\
Baseline &  & 36.36±0.73 & 46.53±12.39 & 37.89±2.31 & 39.24±2.27 & 49.02±0.62 & 49.06±0.91 & 49.18±1.55 \\
\midrule
Decoder runtime & 1 & 37.04±2.6 & 45.83±14.22 & 55.34±16.26 & 29.31±2.22 & 59.83±11.42 & 76.34±6.05 & 111.12±2.29 \\
 & 2 & 23.11±2.07 & 21.32±4.31 & 36.2±10.33 & 14.11±1.19 & 30.87±5.93 & 40.45±3.58 & 55.5±4.04 \\
 & 3 & 12.87±1.33 & 13.4±1.77 & 22.94±6.61 & 9.4±0.84 & 21.46±4.07 & 27.12±2.55 & 38.98±3.47 \\
Baseline &  & 42.04±0.63 & 54.09±14.5 & 43.41±2.32 & 44.13±2.01 & 50.61±0.66 & 50.71±1.24 & 51.13±2.14 \\
\midrule
Sample runtime & 1 & 12.81±0.89 & 16.11±5.08 & 19.22±5.66 & 10.27±0.83 & 22.35±4.26 & 28.12±2.49 & 40.71±1.04 \\
 & 2 & 8.15±0.72 & 7.78±1.66 & 13.54±3.88 & 5.1±0.5 & 12.3±2.36 & 15.71±1.34 & 21.7±1.54 \\
 & 3 & 4.74±0.49 & 5.09±0.7 & 9.1±2.62 & 3.53±0.35 & 8.88±1.74 & 10.99±1.19 & 15.61±1.28 \\
Baseline &  & 47.84±0.83 & 62.69±18.08 & 49.53±2.84 & 50.35±2.6 & 60.72±0.96 & 60.79±1.16 & 61.03±2.29 \\
\midrule
Step & 1 & 2.8905 & 3.0387 & 2.1848 & 3.7707 & 2.1695 & 1.7783 & 1.0003 \\
 & 2 & 5.9309 & 6.1882 & 4.3222 & 8.2337 & 4.282 & 3.4225 & 2.1283 \\
 & 3 & 8.8409 & 9.1424 & 6.5014 & 12.6512 & 6.225 & 5.0025 & 3.0359 \\
\end{tabular}

    \caption{Results for GRU encoder and log-normal mixture.}
    \label{tab:gru_all_results}
\end{table}

\begin{table}[]
    \centering
    \tiny
    \begin{tabular}{lcccccccc}
 & Top-k & Amazon & EQ & Reddit & Retweet & SO & Taobao & Taxi \\
\midrule
LLR & 1 & 4.18±5.31 & 7.81±4.72 & N/A & 4.71±4.88 & 5.25±4.8 & 1.07±5.0 & 2.16±3.98 \\
 & 2 & 9.02±4.59 & 9.88±3.9 & 9.09±6.05 & 5.76±5.1 & 8.0±4.29 & 4.81±5.07 & 3.07±4.01 \\
 & 3 & 8.84±4.65 & 9.96±3.96 & 9.32±6.03 & 4.96±5.06 & 8.08±4.16 & 6.59±4.99 & 3.21±3.97 \\
Baseline &  & 0.01±3.57 & 0.02±2.5 & -0.03±6.91 & -0.0±3.15 & -0.0±3.36 & -0.04±5.3 & -0.0±3.59 \\
\midrule
MMD & 1 & 0.36±0.21 & 0.19±0.16 & 0.47±0.26 & 0.75±0.34 & 0.72±0.34 & 0.86±0.23 & 1.02±0.24 \\
 & 2 & 0.33±0.21 & 0.19±0.16 & 0.38±0.25 & 0.36±0.21 & 0.22±0.16 & 0.33±0.23 & 0.9±0.3 \\
 & 3 & 0.35±0.22 & 0.19±0.16 & 0.38±0.27 & 0.42±0.23 & 0.21±0.15 & 0.39±0.33 & 0.78±0.32 \\
Baseline &  & 0.19±0.14 & 0.19±0.16 & 0.2±0.14 & 0.19±0.14 & 0.19±0.13 & 0.18±0.14 & 0.18±0.14 \\
\midrule
KL per event & 1 & 11.14±3.76 & 5.28±4.47 & 15.94±1.88 & 1.14±2.73 & 13.18±3.24 & 13.0±3.43 & 8.25±4.54 \\
 & 2 & 11.19±3.76 & 4.43±4.01 & 15.71±2.11 & 1.31±3.17 & 12.94±3.33 & 12.0±3.62 & 8.09±4.37 \\
 & 3 & 11.24±3.78 & 4.16±3.89 & 15.77±1.99 & 1.1±2.64 & 13.01±3.31 & 11.77±3.65 & 7.58±4.39 \\
Baseline &  & 11.11±3.78 & 4.22±3.93 & 15.24±2.38 & 1.17±2.59 & 13.01±3.24 & 11.58±3.69 & 6.33±4.11 \\
\midrule
Rank correlation & 1 & 0.01±0.05 & -0.0±0.05 & -0.01±0.05 & 0.0±0.03 & -0.0±0.03 & 0.01±0.05 & -0.0±0.05 \\
 & 2 & -0.01±0.04 & 0.01±0.05 & 0.0±0.05 & 0.0±0.03 & 0.0±0.04 & -0.0±0.05 & -0.0±0.05 \\
 & 3 & 0.01±0.05 & -0.0±0.05 & 0.01±0.05 & 0.0±0.04 & 0.01±0.05 & 0.0±0.05 & -0.0±0.06 \\
Baseline &  & -0.0±0.04 & 0.0±0.04 & -0.01±0.04 & -0.01±0.02 & 0.0±0.03 & -0.0±0.04 & -0.0±0.05 \\
\midrule
Time constant & 1 & 2682.1531 & 115.6433 & N/A & 756.444 & 76.4793 & 43.3235 & 298.3471 \\
 & 2 & 1131.7275 & 167.2671 & N/A & 383.369 & 175.6843 & 221.8055 & 79.851 \\
 & 3 & 1111.671 & 153.6114 & N/A & 238.7219 & 186.8324 & 362.6418 & 88.6586 \\
\midrule
Mark constant & 1 & 33.2394 & 108.5231 & 983.8811 & 155.4943 & 63.4784 & 10.5921 & 6.6346 \\
 & 2 & 33.736 & 84.2254 & 442.0354 & 107.2799 & 66.3883 & 27.6739 & 9.4718 \\
 & 3 & 26.9968 & 64.4483 & 337.1147 & 89.9403 & 45.942 & 45.3992 & 12.4013 \\
\midrule
Step & 1 & 1.3357 & 1.3856 & 1.3674 & 1.6934 & 1.292 & 1.1466 & 1.1433 \\
 & 2 & 2.6618 & 2.6933 & 2.913 & 4.7831 & 2.6727 & 2.3286 & 2.2001 \\
 & 3 & 4.0362 & 4.3148 & 4.6114 & 7.0428 & 4.0173 & 3.8323 & 3.2598 \\
\end{tabular}
    \caption{Results for CNN encoder and log-normal mixture.}
    \label{tab:cnn_all_results}
\end{table}

\begin{table}[]
    \centering
    \tiny
    \begin{tabular}{lcccccccc}
 & Top-k & Amazon & EQ & Reddit & Retweet & SO & Taobao & Taxi \\
\midrule
LLR & 1 & 0.51±1.94 & -1.1±2.71 & 0.9±3.48 & -0.01±7.0 & 0.56±2.75 & 0.75±2.79 & 0.87±1.7 \\
 & 2 & 0.44±2.01 & -0.54±2.63 & 0.65±3.49 & -1.43±6.1 & 0.39±2.72 & 0.38±2.89 & 0.86±1.65 \\
 & 3 & 0.27±2.03 & -0.34±2.53 & 0.53±3.52 & -2.12±5.65 & 0.28±2.78 & 0.08±2.93 & 0.94±1.68 \\
Baseline &  & 0.01±1.97 & -0.03±2.11 & 0.0±3.49 & 0.06±6.54 & 0.01±2.72 & -0.01±2.91 & -0.02±1.14 \\
\midrule
MMD & 1 & 0.32±0.34 & 0.33±0.22 & 0.22±0.21 & 0.24±0.17 & 0.2±0.18 & 0.19±0.16 & 0.27±0.22 \\
 & 2 & 0.31±0.34 & 0.28±0.2 & 0.21±0.19 & 0.3±0.2 & 0.19±0.17 & 0.2±0.15 & 0.26±0.21 \\
 & 3 & 0.28±0.31 & 0.25±0.19 & 0.2±0.18 & 0.36±0.22 & 0.19±0.17 & 0.2±0.15 & 0.27±0.22 \\
Baseline &  & 0.2±0.2 & 0.19±0.17 & 0.19±0.16 & 0.2±0.15 & 0.18±0.16 & 0.19±0.15 & 0.19±0.15 \\
\midrule
KL per event & 1 & 10.32±4.86 & 4.45±4.29 & 5.56±6.59 & 1.74±3.4 & 6.9±4.89 & 7.82±5.55 & 5.46±6.64 \\
 & 2 & 9.64±4.72 & 4.38±4.21 & 5.2±6.38 & 1.63±3.06 & 6.69±4.8 & 6.84±5.18 & 5.84±6.79 \\
 & 3 & 9.16±4.82 & 4.61±4.35 & 4.83±6.04 & 1.52±2.84 & 6.71±4.81 & 6.41±5.15 & 5.58±6.66 \\
Baseline &  & 6.95±4.38 & 3.81±3.86 & 3.55±5.02 & 1.51±2.77 & 5.65±4.37 & 4.52±4.02 & 1.14±2.08 \\
\midrule
Rank correlation & 1 & 0.0±0.06 & -0.0±0.08 & -0.01±0.06 & -0.01±0.09 & -0.0±0.06 & 0.0±0.06 & 0.0±0.06 \\
 & 2 & 0.01±0.06 & -0.01±0.08 & 0.01±0.06 & -0.0±0.1 & -0.0±0.07 & 0.0±0.05 & -0.01±0.06 \\
 & 3 & 0.0±0.06 & 0.01±0.07 & -0.0±0.05 & -0.01±0.1 & -0.0±0.06 & 0.0±0.06 & 0.0±0.05 \\
Baseline &  & 0.0±0.06 & -0.0±0.06 & 0.0±0.05 & -0.02±0.08 & -0.0±0.04 & -0.0±0.06 & -0.0±0.06 \\
\midrule
Time constant & 1 & 3.0293 & 2.9214 & 1.6733 & 1.6153 & 1.721 & 1.5309 & 5.1346 \\
 & 2 & 3.0626 & 2.3598 & 1.5995 & 1.273 & 1.5164 & 1.4856 & 4.6703 \\
 & 3 & 2.8964 & 2.1497 & 1.5658 & 1.1631 & 1.4477 & 1.4743 & 4.4259 \\
\midrule
Mark constant & 1 & 12.9586 & 5.345 & 81.0723 & 2.5292 & 11.8475 & 13.0482 & 558.3685 \\
 & 2 & 9.2925 & 4.4702 & 62.7646 & 2.1753 & 10.3675 & 11.5164 & 650.7604 \\
 & 3 & 8.1383 & 3.8468 & 46.027 & 1.9539 & 8.7498 & 9.9784 & 697.7171 \\
\midrule
Step & 1 & 2.1756 & 1.5597 & 2.9815 & 2.8391 & 1.8302 & 2.9812 & 1.1026 \\
 & 2 & 4.2941 & 3.3363 & 6.0441 & 6.4851 & 3.749 & 5.4506 & 2.1793 \\
 & 3 & 6.0612 & 5.2772 & 8.8392 & 10.4488 & 5.6055 & 7.4253 & 3.2466 \\
\end{tabular}

    \caption{Results for transformer encoder and log-normal mixture.}
    \label{tab:trans_all_results}
\end{table}

\begin{table}[]
    \centering
    \tiny
    \begin{tabular}{lcccccccc}
 & Top-k  & Amazon & EQ & Reddit & Retweet & SO & Taobao & Taxi \\
\midrule
LLR & 1 & -0.02±1.79 & -0.09±1.82 & -0.11±2.56 & 0.0±0.07 & -0.01±2.47 & -0.17±1.55 & -0.04±1.32 \\
 & 2 & -0.04±1.79 & -0.11±1.77 & -0.11±2.53 & 0.01±0.08 & -0.08±2.51 & -0.21±1.56 & 2.91±2.8 \\
 & 3 & -0.05±1.79 & -0.15±1.71 & -0.14±2.54 & 0.02±0.08 & -0.12±2.54 & -0.25±1.56 & 2.93±2.82 \\
Baseline &  & -0.0±1.79 & 0.01±1.79 & 0.05±2.54 & -0.0±0.07 & 0.01±2.44 & 0.01±1.54 & -0.03±1.32 \\
\midrule
MMD & 1 & 0.19±0.16 & 0.2±0.17 & 0.19±0.16 & 0.18±0.16 & 0.18±0.16 & 0.19±0.16 & 0.19±0.16 \\
 & 2 & 0.19±0.16 & 0.2±0.17 & 0.19±0.16 & 0.19±0.16 & 0.19±0.16 & 0.19±0.15 & 0.19±0.16 \\
 & 3 & 0.19±0.15 & 0.2±0.18 & 0.19±0.16 & 0.19±0.16 & 0.19±0.16 & 0.19±0.16 & 0.19±0.16 \\
Baseline &  & 0.19±0.16 & 0.2±0.17 & 0.19±0.16 & 0.19±0.16 & 0.19±0.16 & 0.19±0.15 & 0.19±0.16 \\
\midrule
KL per event & 1 & 7.64±4.26 & 4.19±3.92 & 6.59±6.37 & 1.72±2.86 & 6.95±4.52 & 7.84±4.59 & 2.7±3.34 \\
 & 2 & 7.53±4.29 & 4.18±3.95 & 6.5±6.38 & 1.78±2.95 & 7.08±4.55 & 7.85±4.4 & 6.92±5.83 \\
 & 3 & 7.62±4.28 & 4.23±3.95 & 6.47±6.4 & 1.73±2.86 & 6.97±4.56 & 7.75±4.5 & 6.99±6.54 \\
Baseline &  & 7.61±4.27 & 4.09±3.89 & 6.43±6.27 & 1.73±2.88 & 7.0±4.57 & 7.66±4.48 & 2.69±3.31 \\
\midrule
Rank correlation & 1 & 0.0±0.05 & -0.01±0.16 & 0.01±0.06 & 0.0±0.01 & -0.0±0.06 & -0.0±0.06 & 0.01±0.08 \\
 & 2 & 0.0±0.05 & -0.0±0.15 & 0.0±0.09 & -0.0±0.0 & -0.0±0.05 & -0.0±0.07 & -0.0±0.04 \\
 & 3 & -0.0±0.05 & 0.01±0.13 & -0.01±0.08 & 0.0±0.0 & 0.01±0.06 & -0.01±0.06 & -0.0±0.06 \\
Baseline &  & -0.0±0.05 & -0.0±0.14 & -0.01±0.08 & 0.0±0.01 & -0.0±0.06 & 0.0±0.07 & 0.0±0.09 \\
\midrule
Time constant & 1 & 1.2667 & 1.9845 & 1.5389 & 1.0026 & 1.4444 & 1.4892 & 3.327 \\
 & 2 & 1.2805 & 2.2042 & 1.706 & 1.0053 & 1.5484 & 1.5257 & 3.0401 \\
 & 3 & 1.286 & 2.3504 & 1.7577 & 1.0068 & 1.6356 & 1.5492 & 2.7018 \\
\midrule
Mark constant & 1 & 1.7289 & 1.3241 & 2.3114 & 1.1298 & 11.1859 & 4.3067 & 7021.4775 \\
 & 2 & 1.8439 & 1.3375 & 2.507 & 1.1448 & 18.6992 & 4.3776 & 970.8654 \\
 & 3 & 1.8874 & 1.3411 & 2.5984 & 1.1519 & 37.7343 & 4.4245 & 1134.9554 \\
\midrule
Step & 1 & 2.2616 & 2.5223 & 2.5251 & 4.1626 & 2.3546 & 1.8214 & 1.0004 \\
 & 2 & 4.4213 & 5.1332 & 5.1172 & 8.9544 & 4.5941 & 3.4815 & 2.1796 \\
 & 3 & 6.462 & 7.6076 & 7.5137 & 13.8284 & 6.6238 & 5.1294 & 3.0276 \\
\end{tabular}
    \caption{Results for GRU encoder and exponential distribution.}
    \label{tab:exp_all_results}
\end{table}

\begin{table}[]
    \centering
    \tiny
    \begin{tabular}{lcccccccc}
  & Data & Amazon & EQ & Reddit & Retweet & SO & Taxi \\
 & Top-k &  &  &  &  &  &  \\
\midrule
LLR & 1 & -0.03±1.89 & -0.16±1.99 & -0.11±4.5 & -0.01±7.23 & -0.05±2.44 & -0.01±1.09 \\
 & 2 & -0.03±1.92 & -0.09±1.99 & -0.12±4.46 & -0.04±7.19 & -0.14±2.5 & 3.07±2.69 \\
 & 3 & -0.07±1.9 & -0.03±2.0 & -0.08±4.47 & -0.02±7.24 & -0.21±2.57 & 3.03±2.71 \\
Baseline &  & 0.0±1.92 & -0.01±1.97 & 0.04±4.47 & -0.03±7.24 & 0.01±2.4 & 0.01±1.11 \\
\midrule
MMD & 1 & 0.19±0.16 & 0.2±0.16 & 0.2±0.15 & 0.2±0.14 & 0.19±0.16 & 0.18±0.15 \\
 & 2 & 0.18±0.16 & 0.2±0.16 & 0.2±0.15 & 0.2±0.14 & 0.19±0.16 & 0.2±0.16 \\
 & 3 & 0.19±0.15 & 0.19±0.16 & 0.2±0.15 & 0.19±0.13 & 0.19±0.16 & 0.2±0.16 \\
Baseline &  & 0.18±0.16 & 0.19±0.15 & 0.2±0.15 & 0.2±0.13 & 0.18±0.16 & 0.18±0.16 \\
\midrule
KL per event & 1 & 7.46±4.22 & 4.14±3.89 & 7.12±6.66 & 1.28±2.52 & 7.04±4.54 & 2.51±3.05 \\
 & 2 & 7.52±4.22 & 4.1±3.83 & 6.93±6.56 & 1.33±2.59 & 7.06±4.58 & 7.78±6.37 \\
 & 3 & 7.56±4.26 & 4.2±3.9 & 6.97±6.51 & 1.37±2.64 & 7.05±4.57 & 7.14±6.78 \\
Baseline &  & 7.46±4.25 & 4.12±3.87 & 6.89±6.51 & 1.34±2.58 & 6.99±4.51 & 2.53±3.05 \\
\midrule
Rank correlation & 1 & 0.0±0.05 & 0.0±0.09 & -0.0±0.06 & 0.01±0.05 & -0.01±0.06 & 0.01±0.07 \\
 & 2 & 0.0±0.05 & -0.0±0.08 & 0.0±0.05 & 0.01±0.05 & -0.01±0.05 & -0.01±0.04 \\
 & 3 & -0.0±0.05 & 0.01±0.09 & 0.0±0.06 & 0.0±0.05 & -0.0±0.05 & 0.0±0.04 \\
Baseline &  & 0.01±0.05 & 0.01±0.08 & 0.01±0.05 & 0.01±0.05 & -0.01±0.05 & -0.01±0.06 \\
\midrule
Time constant & 1 & 1.2976 & 1.3292 & 1.1461 & 1.0479 & 1.4029 & 10.4097 \\
 & 2 & 1.3104 & 1.3891 & 1.1776 & 1.0519 & 1.6359 & 9.5996 \\
 & 3 & 1.3221 & 1.4222 & 1.2016 & 1.0562 & 1.8774 & 9.7414 \\
\midrule
Mark constant & 1 & 1.7017 & 1.2497 & 3.4656 & 1.1285 & 10.0589 & 9639.1475 \\
 & 2 & 1.8175 & 1.2626 & 3.8273 & 1.142 & 23.7208 & 886.7015 \\
 & 3 & 1.8802 & 1.263 & 3.97 & 1.1512 & 67.8345 & 1109.8258 \\
\midrule
Step & 1 & 2.2871 & 3.1127 & 2.5807 & 3.9662 & 2.3344 & 1.0002 \\
 & 2 & 4.4654 & 6.4353 & 5.115 & 8.5965 & 4.5189 & 2.1213 \\
 & 3 & 6.486 & 9.5578 & 7.5811 & 13.1784 & 6.4567 & 3.0229 \\
\end{tabular}

    \caption{Results for GRU encoder and Weibull distribution.}
    \label{tab:weibull_all_results}
\end{table}

\subsection{Limit order books}\label{app:lob_experiment}

The figures present a comprehensive statistical comparison between conventional and speculative sampling methods for limit order book data. In Figure \ref{fig:transition_matrices}, transition matrices are constructed by calculating the probabilities of state-to-state transitions for both time deltas and mark types across three conditions: original data, conventional sampling, and speculative sampling. These matrices reveal that both sampling methods preserve the underlying transition dynamics of the data. 

In the main text, Figure \ref{fig:lob_samples} displays empirical event sequences visualized as vertical lines on timelines (with colors representing different mark types), providing qualitative evidence that both sampling approaches generate visually similar patterns. Each sequence is prompted with the same initial sequence.

Figure \ref{fig:lob_cumulative} plots the accumulation of events by mark type over time, showing mean counts with standard deviation bands for both sampling methods across the four order types.

\begin{figure}[h]
    \centering
    \includegraphics[width=0.8\linewidth]{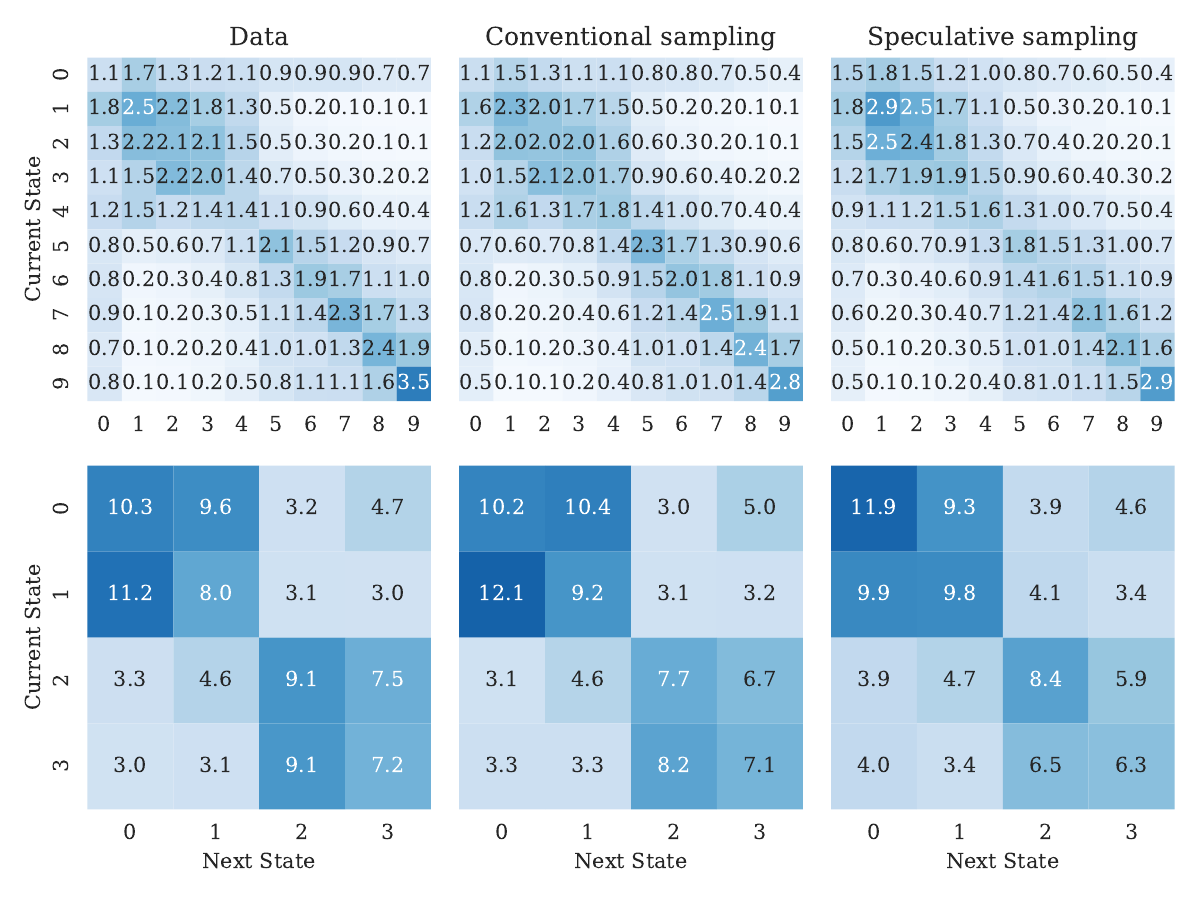}
    \caption{Limit order book transition matrix between two consecutive events. (Left) True transition matrix between the previous and next state. (Middle) Transition matrix obtained with conventional sampling. (Right) Transition matrix computed on samples coming from a speculative sampling scheme.}
    \label{fig:transition_matrices}
\end{figure}

\begin{figure}[h]
    \centering
    \includegraphics[width=\linewidth]{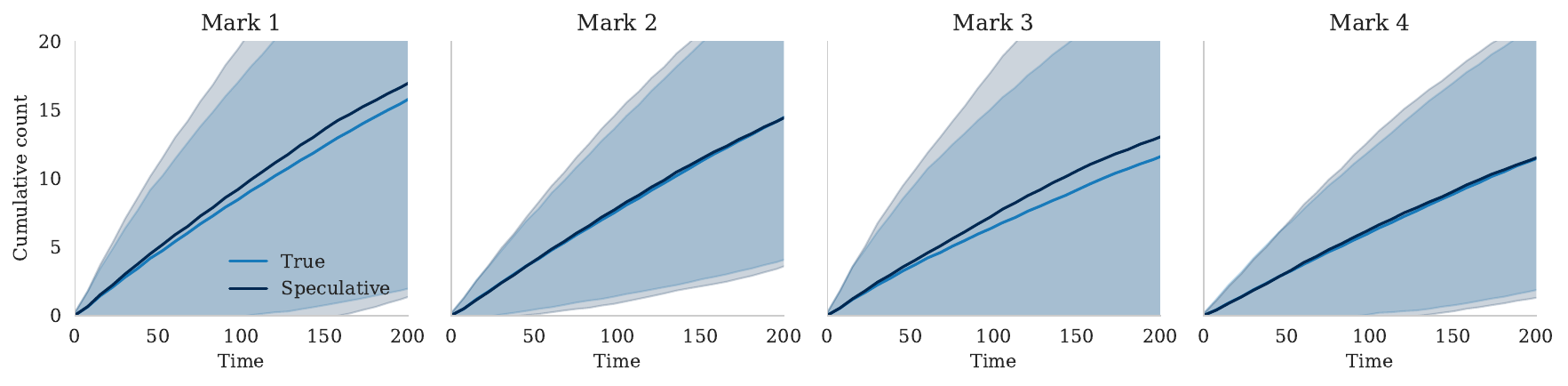}
    \caption{Limit order book cumulative event counts per mark dimension.}
    \label{fig:lob_cumulative}
\end{figure}

\end{document}